\documentclass[letterpaper,11pt]{article}
\usepackage[utf8]{inputenc}
\usepackage{times}  
\usepackage{helvet} 
\usepackage{courier}  
\usepackage[hyphens]{url}  
\usepackage{graphicx} 
\usepackage[ bottom=1.25in, top=1.25in, left=1.25in, right=1.25in]{geometry}
\urlstyle{rm} 
  
\usepackage{natbib} 
\usepackage{caption} 
\usepackage{authblk}
\usepackage[dvipsnames]{xcolor}
\usepackage[colorlinks = true,
            linkcolor = blue,
            urlcolor  = blue,
            citecolor = LimeGreen,
            anchorcolor = blue]{hyperref}

\title{Fair Active Ranking from Pairwise Preferences}
\date{}
\author[$1$]{Sruthi Gorantla}
\author[$2$]{Sara Ahmadian}
\affil[$1$]{\small{Indian Institute of Science, Bengaluru, India. \texttt{gorantlas@iisc.ac.in}}}
\affil[$2$]{\small{Google Research. \texttt{sahmadian@google.com}}}

\usepackage{booktabs} 
\usepackage{amssymb}
\usepackage{xcolor}
\usepackage{amsmath}
\usepackage{amsthm}
\usepackage{thmtools}
\usepackage[switch]{lineno}
\usepackage{caption}
\usepackage{enumitem}
\usepackage{bbm}
\usepackage{multicol}

\usepackage[capitalize,noabbrev]{cleveref}
\usepackage{algorithm}
\usepackage{algorithmic}

\usepackage{thmtools}
\usepackage{thm-restate}

\newtheorem{theorem}{Theorem}
\newtheorem{lemma}[theorem]{Lemma}

\newtheorem{definition}[theorem]{Definition}

\newtheorem*{remark}{Remark}

%
\newcommand{\R}{{\mathbb R}}

\newcommand{\E}{{\mathbf E}}

\newcommand{\cA}{{\mathcal A}}

\newcommand{\cB}{{\mathcal B}}

\newcommand{\cI}{{\mathcal I}}

\newcommand{\cE}{{\mathcal E}}

\newcommand{\cF}{{\mathcal F}}

\newcommand{\cO}{{\mathcal O}}

\newcommand{\X}{{\mathcal X}}

\newcommand{\tS}{{\tilde {S}}}
\newcommand{\sS}{{S^*}}

\newcommand{\p}{{\mathbf p}}

\newcommand{\teps}{{\tilde \epsilon}}
\newcommand{\sm}{{\setminus}}
\newcommand{\blind}{\texttt{group-blind}}
\newcommand{\aware}{\texttt{group-aware}}
\newcommand{\pop}{\textsc{Pop}}

\newcommand{\efair}{\err^{\texttt{fair}}}

\def \algbtp{\textsc{Beat-the-Pivot}} 
\def \pac{{\bf $(\epsilon,\delta)$-PAC-Ranker}}
\def \pacf{{\bf $(\epsilon,\delta)$-PACF-Ranker}}
\def \ebr{{\bf $\epsilon$-Best-Ranking}}

\def \epqbr{{\bf $\epsilon$-Best-}{\sc Fair}{\bf-Ranking}}

\newcommand{\bnu}{{\boldsymbol \nu}}

\newcommand{\bsigma}{\boldsymbol \sigma}
\newcommand{\paren}[1]{\left(#1\right)}
\newcommand{\sparen}[1]{\left[#1\right]}

\newcommand{\err}{\text{err}}


\makeatother
\allowdisplaybreaks

\begin{document}
\maketitle
\begin{abstract}
  We investigate the problem of probably approximately correct and \textit{fair} (PACF) ranking of items by adaptively evoking pairwise comparisons.  Given a set of $n$ items that belong to disjoint groups, our goal is to find an $(\epsilon, \delta)$-PACF-Ranking according to a fair objective function that we propose. We assume access to an oracle, wherein, for each query, the learner can choose a pair of items and receive stochastic winner feedback from the oracle.
  Our proposed objective function asks to minimize the $\ell_q$ norm of the error of the groups, where the error of a group is the $\ell_p$ norm of the error of all the items within that group, for $p, q \geq 1$.  
  This generalizes the objective function of $\epsilon$-Best-Ranking, proposed by \citet{saha19a}.
  
  By adopting our objective function, we gain the flexibility to explore fundamental fairness concepts like equal or proportionate errors within a unified framework. Adjusting parameters $p$ and $q$ allows tailoring to specific fairness preferences.
  We present both group-blind and group-aware algorithms and analyze their sample complexity. We provide matching lower bounds up to certain logarithmic factors for group-blind algorithms.
  For a restricted class of group-aware algorithms, we show that we can get reasonable lower bounds.
  We conduct comprehensive experiments on both real-world and synthetic datasets to complement our theoretical findings.
\end{abstract}

\section{Introduction}

Ranking is a fundamental problem in data mining and machine learning that arises in a wide range of applications, such as search engines, recommender systems, and information retrieval.  The simplest and most extensively studied version of ranking uses noisy pairwise comparisons, as initiated by \citet{feige1994computing}. Recently, it has also been studied with dueling bandits \citep{Busa2014}. 

Ranking problems have been studied under fairness constraints to mitigate or eliminate the bias and discrimination in the solutions constructed by the existing algorithms. The primary focus of these works is on generating rankings with respect to socially salient attributes \citep{zehlike2022fairness1,zehlike2022fairness2,pitoura2022fairness} that asks for equal or proportional representation within every prefix of the ranking. Although such fairness constraints are especially useful in human-centric applications, such as hiring, credit allocation, recidivism prediction, and college admissions, they may still produce adverse outcomes for particular groups of individuals. This is primarily because the error in ranking is measured as an aggregate of errors over all the items. The resulting ranking may systematically discriminate against minority groups in terms of error, even after applying representation fairness constraints. The error function does not take into account biases in the data or differences in the data distributions for different demographic groups. 

We introduce a fair ranking inspired by social fairness concepts, which focuses on generating rankings that fairly distribute error across groups. Our metric is versatile in the definition of the error within each group and in the aggregation of the errors over different groups. This yields a metric that generalizes several well-known notions of fairness.

In this paper, we study active, PAC ranking of $n$ items using pairwise comparisons. In this setting, the learner receives preference feedback, for requested pairwise comparisons, according to the well-known Plackett-Luce (PL) probability model. The learner’s goal is to find a near-optimal ranking, with respect to tolerance parameter $\epsilon$, with high probability $(1-\delta)$, using as few pairwise comparison rounds as possible. Overall, our contributions can be summarized as follows:
\begin{enumerate}[leftmargin=*,itemsep=0mm]
    \item We introduce a fair variant of the error metric for rankings based on social fairness concepts (Definition~\ref{def:ours}). To this end, we also discuss the limitations of previous fair ranking definitions (Section~\ref{sec:prelims}), and how algorithms optimized for our proposed objective function overcome these limitations.
    \item We study two classes of algorithms depending on whether they have access to the group labels of the items or not. We call the algorithms without access \blind~and those with access \aware. We design efficient algorithms output probably approximately correct rankings for our fair objective function and analyse their sample complexity bounds. We provide matching lower bounds up to certain logarithmic factors for group-blind algorithms.
    For a restricted class of group-aware algorithms, we show that we can get reasonable lower bounds.
    \item 
    We empirically evaluate our algorithms on real-world and synthetic datasets and show that our \aware~algorithm has significantly lower sample complexity than the \blind. We also show that our \aware~algorithm achieves lower overall error as well as lower error on all the groups than \blind~algorithm. 
\end{enumerate}
In \Cref{sec:related}, we discuss several related works. In \Cref{sec:prelims}, we introduce our metric and define PAC ranking for our metric. In Sections~\ref{sec:theoretical_results} and \ref{sec:experimental_results}, we show our main theoretical and experimental results respectively.

\section{Related Works}
\label{sec:related}
\paragraph{Fairness in ranking.} Group fairness notions ask for groups to be treated equally, such as asking for equality of opportunity in supervised learning \citep{hardt2016}, equitable clustering costs across groups in clustering \citep{ghadiri21,ABV2021,chlamtac22,GKDL23}, and equal representation of the groups in ranking \citep{CSV2018} or subset selection \citep{kleinberg2018}.
On the contrary, individual fairness treats fairness at the individual level and not as some aggregate function of groups. It asks for similar individuals to be treated similarly for the task at hand \citep{DHPRZ2012}. 
Particularly in ranking, the group fairness notions include ensuring sufficient representation of all the groups in each prefix of the top $k$ ranking \citep{CSV2018,ZBCHMB2017,zehlike2022fairness1} or every $k$ consecutive ranks \citep{gorantla21a}.
Other works ask for equality of exposure of the groups \citep{SJ2018}, fair ranking under uncertain merit scores \citep{SKJ2021}, fair ranking with noisy sensitive attributes \citep{mehrotra22}, fair ranking in the presence of implicit bias \citep{CMV2020}, etc.
However, these are all algorithmic solutions that assume access to the merit (or relevance) scores of the items and maximize some objective function such as {\textsf{NDCG, Precision@$k$}}.
In contrast, we assume that we only have oracle access to pairwise preferences of the items and study the problem of inferring a ranking with a minimal number of queries with high confidence $1-\delta$, and up to an error $\epsilon$, while ensuring fairness in errors incurred by the groups. Even though previous works such as \citep{saha19a} studied non-fair variant of the problem in the pairwise preference model, to the best of our knowledge, we are the first to study this with fairness consideration.
\paragraph{Cascaded norms for fairness.} 
Cascaded norm objectives have been used in generalizing the cost-based objective functions to account for different costs borne by different groups. \textit{Metric space clustering} objective asks for minimizing the $\ell_p$-norm of the distances between points in a cluster and their center. Most interesting cases are when $p \in \{1, 2, \infty\}$ as they correspond  to the well studied $k$-median, $k$-means, and $k$-center, respectively.
\cite{chlamtac22} later generalized this further to account for group fairness. They ask to minimize the $\ell_q$-norm of the cost of the groups, where cost of a group is the $\ell_p$-norm of the distances between the points in a cluster from that group and their cluster center.
This generalizes the fair clustering notions such as \textit{Socially Fair} $k$-means and $k$-medians clustering.
This objective function allows us to treat many problems under one umbrella.
Taking inspiration from this, we propose the notion of \epqbr, using norms of errors in a cascaded fashion, first within the groups and then across the groups.

\paragraph{Active ranking from pairwise comparisons.}
Pairwise comparisons are well-motivated for the sake of ranking as they provide an easier way of collecting peoples' preferences; it is easier to compare two candidates for a job rather than assign an absolute score to them independently and compare the scores to get relative preferences. Even mathematically speaking, pairwise preferences enforce a weaker constraint on the data collection process than asking for exact scores because the former need not satisfy transitivity in the pairwise preferences of the items while the latter induces transitivity on the items. 
The \textit{passive} sample complexity for ranking with pairwise comparisons has been resolved under different model assumptions in recent works \citep{Gleich11,rajkumar16}.
The exact sample complexity depends on the objective function defined to measure the ``goodness'' of the ranking.
However, collecting pairwise preference labels might be expensive.
Active learning has become a prevailing technique for designing efficient supervised learning algorithms, where one only needs to query labels that are most “informative”.
In many settings, active learning gives an exponential improvement in the sample complexity compared to its \textit{passive} counterparts under some weak distributional assumptions \citep{BL13}.
Recent works have studied ranking in the \textit{active} or adaptive setting \citep{Ailon2012AnAL,jamieson11,saha19a,RLS21}, among which \cite{RLS21, saha19a} studied PAC sample complexity bounds in a multi-wise comparisons feedback model. 
There have also been works on estimating the parameters of the pairwise preference model that generates the rankings \cite{khetan16}.


\section{Preliminaries}
\label{sec:prelims}
\paragraph{Notation.}
For any positive integer $t$, we use $[t]$ to denote the set of integers $\{1,2,\ldots, t\}$.
Let $\cI$ represent the set of items and the set $\Sigma_\cI$ to contain all possible permutations of the items in $\cI$.
Then for any permutation $\bsigma \in \Sigma_\cI$, $\bsigma(j)$ represents the index of item in $j$-th position of the permutation.
We use $\bsigma^{-1}(i)$ to represent the position item $i$ is assigned to in the permutation.
We assume that the set of items can be partitioned into $\gamma$ disjoint groups based on socially salient features such as age, race, gender, etc, denoted as $G_1, G_2, \ldots, G_\gamma$. Let $n_h := |G_h|, \forall h \in [\gamma]$ and $n := |\cI|$.
Therefore, $n = \sum_{h \in [\gamma]} n_h$.

\paragraph{The active ranking problem.}
Given a set $\cI$ of items, a ranking $\bsigma$ defines a total ordering of the items. Let us assume that each item $i \in \cI$ is associated with a true relevance score $\theta_i \in \R$, where the higher the value of $\theta_i$, the better to rank it at the top.
There is only one ranking that is consistent with the true scores -- the one that ranks the items in the descending order of their true scores.
However, we consider the setup where the true scores are not directly available, but any pair of items can be compared via an oracle, which has access to their true scores.
Depending on the type of feedback from the oracle, the goal is to find a ranking of the items by \textit{actively} querying for pairwise comparisons of the items one after the other.
It is useful to note that sorting $n$ items based on pairwise comparisons fits in this problem setup, which needs $\Theta(n\log n)$ actively chosen pairwise comparison queries.
Note that the pairwise comparison queries are fixed up front in the \textit{passive} setup; hence, the output of one query does not affect what other queries are made by the algorithm, unlike the active setup.

\paragraph{Plackett-Luce (PL) feedback oracle.}
PL model is extensively used in generating stochastic rankings \cite{rajkumar16,saha19a,SKJ2021}.
The probability of sampling a ranking in the PL model is given by,
\[
\Pr[\bsigma | \theta_1, \ldots, \theta_n] = \prod_{i \in [n]}\frac{\theta_{\sigma^{-1}(i)}}{\sum_{j = i}^n\theta_{\sigma^{-1}(j)}}.
\]
An appealing property of the PL model is that the pairwise winner probabilities for a pair of items $\{i,i'\} \in \cI$ are very easy to calculate. That is,
\[
\Pr[i\mid \{i,i'\}] = \frac{\theta_i}{\theta_i+\theta_{i'}}~~\text{and}~~\Pr[i'\mid \{i,i'\}] = \frac{\theta_{i'}}{\theta_i+\theta_{i'}}.
\]
\cite{saha19a} also studied the PAC sample complexity bounds under the PL model to leverage the property of independence of irrelevant attributes satisfied by the PL model, which helps in consistently aggregating pairwise preferences to find a total ordering. Hence, we also study our problem under the PL model.
\paragraph{Ranking performance.}
In a ranking $\bsigma$, each item $i$ suffers an error based on the items incorrectly ordered above $i$.
One way of quantifying this error is as follows,
\[
d_i(\bsigma; \theta)  := \max_{\substack{i' \in [n]~\text{s.t.}\\ \theta_i > \theta_{i'} \land \bsigma^{-1}(i) > \bsigma^{-1}(i')}} \theta_i - \theta_{i'}.
\]
If there are no items with higher scores ranked after $i$, we define the error for $i$ to be zero.
Then, the performance of a candidate ranking $\bsigma \in \Sigma_{\cI}$ can be measured as an aggregate of the errors of the items.
One such metric proposed by \cite{saha19a} is the $\ell_{\infty}$ norm of the errors of the items. 
Using this, \cite{saha19a} define a ranking $\bsigma \in \Sigma_{\cI}$ to be an \ebr~iff,
\begin{equation}
\err(\bsigma;\theta)  := \max_{i \in \cI}~~d_i(\bsigma;\theta) < \epsilon. \label{eq:theirs}
\end{equation}
A $0$-Best-Ranking is called a Best-Ranking or optimal ranking of the PL model.

\begin{figure}
    \centering
    \includegraphics[width=0.5\textwidth]{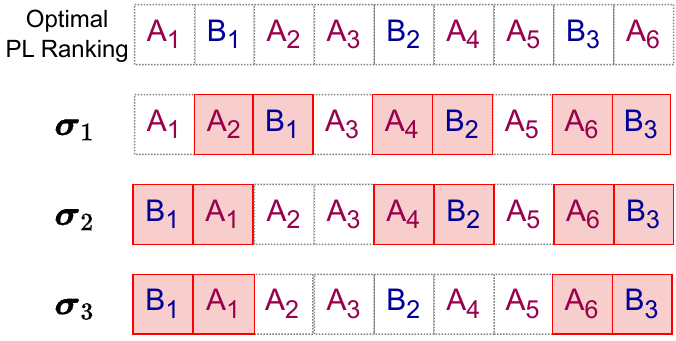}
    \caption{Example of different $\epsilon$-Best-Rankings.}
    \label{fig:example}
\end{figure}

We will now illustrate the shortcomings of this metric with an example (see \Cref{fig:example}).
Let there be a set of $9$ items such that $6$ items belong to \textit{group-A} (let us call this the \textit{majority} group) and $3$ items belong to \textit{group-B} (\textit{minority} group).
Let their true scores be $\theta_i = 1-((i-1)*0.09)$ for each $i = 1,2,\ldots,10$.
Then, their optimal PL ranking is as shown in Figure~\ref{fig:example}.
We use $A_t$ and $B_t$ to represent $t$-th item from group-$A$ and group-$B$, respectively, in the order in which they appear in the optimal PL ranking.
For $\epsilon = 0.09$ all of $\bsigma_1, \bsigma_2,$ and $\bsigma_3$ are $\epsilon$-Best-Rankings.
However, in $\bsigma_1$, all the error is borne by group-$B$, and in $\bsigma_2$, even though both the groups incur errors, the errors are unequal.
Hence, \ebr~does not guarantee a fair distribution of error across different groups of items.
$\bsigma_3$ is a good solution as both the groups incur an error of $\epsilon$. 
Hence, we need to optimize the algorithms for an objective function that distributes error fairly across groups.
We also note here that $\bsigma_1$ also satisfies the notion of proportional representation of the groups proposed in \citep{gorantla21a} since in every $3$ consecutive ranks, there are $2$ items from group-$A$ and $1$ item from group-$B$. Hence, achieving a proportional representation of groups in ranking may not be sufficient to ensure a fair distribution of error across groups.

\paragraph{Proposed metric.}
We propose to use a parameterized variant of the error, namely, using the $\ell_q$ norm of the group-wise error, where the group-wise error is nothing but the $\ell_p$-norm of the errors of the items within the group.
Then, a good ranking can be characterized as follows,
\begin{definition}[\epqbr]
\label{def:ours}
For numbers $p, q \in [1, \infty)$ and a non-negative weight function $w:[\gamma] \rightarrow \R_{\ge 0}$ that assigns weights to groups, $\bsigma \in \Sigma_{\cI}$ is an \epqbr~iff 
\begin{align}
    &\efair(\bsigma;\theta):=\bigg(\sum_{h \in [\gamma]}w(h) \cdot \err_h(\bsigma;\theta)^q\bigg)^{\frac{1}{q}} < \epsilon,\label{eq:epq_error}\\
&\text{where,}~~\forall h \in [\gamma],\err_h(\bsigma;\theta) := \paren{ \sum_{i \in G_h} d_i(\bsigma;\theta)^p}^{\frac{1}{p}}. \label{eq:group_wise_error}
\end{align}
\end{definition}
In our example in Figure~\ref{fig:example}, if we compute the $\ell_p$ norm of the error incurred by the items within the group for smaller values of $p$ (say $p = 1$), we get that group $B$ incurs an error of $3\epsilon$ and $2\epsilon$ with $\bsigma_1$ and $\bsigma_2$ respectively.
Whereas, group $A$ incurs an error of $0$ and $\epsilon$ in $\bsigma_1$ and $\bsigma_2$ respectively.
Further, $\ell_q$ norm of the errors across the groups, with $q = 1$ and $w(A) = w(B) = 1$, gives us that the overall $(\ell_p, \ell_q)$ error is $3\epsilon$ for both $\bsigma_1$ and $\bsigma_2$.
But the $(\ell_p, \ell_q)$ error for $\bsigma_3$ is still $\epsilon$.
Therefore, algorithms designed for our objective have $(\ell_p, \ell_q)$ error less than $\epsilon$, with high probability.

In this paper, we study the problem with the weight functions of the form $w(h) := \frac{\phi_h}{n_h}$ for each group $h \in [\gamma]$, where $1 \le \phi_h \le n_h$. This is already a rich class of weight functions since setting $\phi_h = 1$ measures the \textit{average} of the errors of the group's items, allowing us to achieve \textit{proportional errors} across groups. Setting $\phi_h = n_h$ counts the number of items from the group on which the algorithm makes an error, achieving \textit{equal errors} across groups.

Note that \epqbr~is a generalization of \ebr~as shown in the following theorem, the proof of which appears in Appendix~\ref{app:equivalence}.
\begin{restatable*}{thm}{equivalence}
\label{thm:equivalence}
A ranking $\bsigma\in \Sigma_{\cI}$ is an \epqbr~for $p, q \rightarrow \infty$ and any non-negative weight function $w$, if and only if it is an \ebr. 
\end{restatable*}

\begin{remark}
When $q \rightarrow \infty$, \Cref{def:ours} asks for the maximum group-wise $\ell_p$ norm error to be less than $\epsilon$.
This is similar to asking for egalitarian fairness while ranking items belonging to socially salient groups.
Such a notion has also been studied as socially fair clustering, first introduced in \cite{ghadiri21} and later generalized in a way similar to \Cref{def:ours} in \cite{chlamtac22}.
\end{remark}

A Probably-Approximately-Correct (PAC) ranker is a ranking algorithm that, for any problem instance including two parameters $\epsilon, \delta \in (0,1)$, sequentially makes a finite number of oracle calls and outputs a ranking $\bsigma$ of the items such that the error $err$ is less than $\epsilon$ with probability greater than $1-\delta$.
\cite{saha19a}
studied the PAC ranking problem in the Plackett-Luce feedback oracle setup with subset-wise preferences. 
They give an optimal sample complexity bound for error defined as \Cref{eq:theirs}.

\paragraph{Probably-Approximately-Correct and \textit{fair} ranker.}
We study a group-fair variant of the PAC ranking problem, where the only difference is that we measure the error using \Cref{eq:epq_error}. Then an algorithm that satisfies this can be~defined as follows,
\begin{definition}[\pacf]
A sequential algorithm that outputs a ranking is called \pacf~if it always outputs a ranking after a finite number of oracle calls, and the ranking output is an \epqbr~with probability at least $1-\delta$ for given parameters $p$, $q$ and a weight function $w$. 
\end{definition}
We refer to the number of oracle calls as \textit{sample complexity} interchangeably throughout this paper.
In conclusion, this paper aims to answer two questions:
\begin{enumerate}[leftmargin=*,noitemsep,topsep=0pt]
    \item \textit{What is the worst case minimum expected sample complexity required to learn an \epqbr?}
    \item \textit{Is there an \pacf~with matching sample complexity?}
\end{enumerate}

\begin{remark}(\textbf{Group-Aware vs.~Group-Blind}). Note that the group membership of the items may not always be available to the algorithm due to legal restrictions or simply because of the unavailability of the labeling. 
We call the algorithms with access to group information \aware~algorithms and those without access to group information \blind. In this paper, we answer the questions above for both types of algorithms.
\end{remark}

Further, we consider the class of algorithms that satisfy symmetry defined in \cite{saha19a}, additionally conditioning on appropriate mapping of the group membership. Roughly speaking, the algorithms should be insensitive to the specific labeling of the items. This property is needed to get tighter lower bounds. For a formal definition, see \Cref{def:symmetry} in \Cref{app:additional_notation}.

\section{Theoretical Results}
\label{sec:theoretical_results}
For the \blind~case, we show that using \algbtp~from \cite{saha19a} with appropriately adjusted error parameter $\epsilon$ and confidence parameter $\delta$ already gives us an efficient \pacf.
However, our key contribution is the lower bound on the sample complexity.
Formally, we prove the following theorem.
\begin{restatable*}[\textbf{\blind~sample complexity}]{thm}{blindbound}
\label{thm:upperbound_groupblind_multgroups}
Given an
error parameter $\epsilon \in (0,1)$, a confidence parameter $\delta \in [0,1]$, and a class of \blind~and symmetric  \pacf s, $\cA$, for PL feedback, there exists an instance $\nu$ such that any algorithm in $\cA$ on $\nu$ needs\\
$\Omega\paren{\frac{n^{1+\max\left\{\frac{2}{q},\frac{2}{p}\right\}}}{\epsilon^2}}$ 
samples.

\noindent Further, $\exists$ \pacf~with sample complexity 
$O\paren{\frac{n^{1+\max\left\{\frac{2}{q},\frac{2}{p}\right\}}}{\epsilon^2}\log\frac{n}{\delta}}$.
\end{restatable*}
Proving the lower bound on the sample complexity involves defining a true instance, i.e., the scores of the items, and defining a class of alternative instances with the scores modified from the true scores.
The crucial step here is to define an \textit{event} carefully so that the event being satisfied is a necessary condition for any \pacf~algorithm on the true instance.
Moreover, we need the complement of the event to be a necessary condition on the \pacf~algorithms for any of the alternative instances.

\begin{remark}
    The set of hard instances in \cite{saha19a} only give a loose lower bound of $\Omega\paren{\frac{n}{\epsilon^2}\log \frac{n}{\delta}}$.
    Hence, we need a more creative construction of the class of hard instances and the event, that takes into account that the errors of the items may accumulate in $\efair$ (\Cref{eq:epq_error}), rather than solely focusing on the maximum error, as done in $\err$ (\Cref{eq:theirs}).
\end{remark}
    
Below, we give a proof sketch where we focus on the design of our hard class of instances and the definition of a suitable event. The full proof can be found in \Cref{app:blind}.
\begin{proof}[Proof Sketch]
    
    For a given set of items $\cI$ with input parameters $\epsilon, \delta, p, q$, and $\phi_h, \forall h \in [\gamma]$, we fix a subset of $T \subset \cI$ of size $n/4$.
    Let $\teps = \epsilon\cdot\paren{\frac{4}{n}}^{1/p}$.
    Then, an instance of the problem is denoted by a subset $S \subseteq \cI\sm T$ such that the scores of the items for this instance are,
    \begin{gather*}
    \forall i \in S,~~\theta_i = \theta\paren{\frac{1}{2} + \teps}^2,\quad \forall i \in T,~~\theta_i = \theta\paren{\frac{1}{4}-\teps^2},\text{and}\quad\forall i \not\in S\cup T,~~\theta_i = \theta\paren{\frac{1}{2} - \teps}^2.
    \end{gather*}
    We then fix a set $\sS \subseteq \cI \sm T$ of size $|\sS| = n/4$ to be the true instance and any set $\tS^* = \sS \cup (\cI \sm (\sS\cup T))$ of size $|\tS^*| = n/2$ to be an alternative instance.
    Note that there will be ${{n/2} \choose {n/4}}$ such alternative instances for every true instance $\sS$.
    Then, we define the event to be,
    \[
    \cE(S) := \left\{\left|(S \cup T) \cap  \bsigma_{\cA}\left(\frac{n}{4}+2:n\right)\right| < \frac{n}{4}\right\}.
    \]
    That is, for any ranking $\bsigma_\cA$ output by any symmetric \pacf~on an instance $S$, the ranks $\frac{n}{4}+2$ to $n$ contain less than $\frac{n}{4}$ items from $\sS\cup T$.
    This is a high probability ($> 1-\delta$) event for $\sS$ because, otherwise, at least $n/4$ items suffer an error such that the errors add up to more than $\epsilon$.
    On the contrary, this event is a low probability ($< \delta$) event for alternative instances because unless all the items in $T$ appear in the ranks $\frac{n}{4}+2$ to $n$, the ranking can not have an error less than $\epsilon$.
    But $|T| = n/4$. Hence, if the event is satisfied, the error is greater than $\epsilon$. 
    Hence, we apply the change-of-measure inequality by \cite{Kaufmann2014OnTC} to get a lower bound tight up to $\log$ factors.\\
    \textbf{Upper bound.} Follows from the sample complexity guarantees of \algbtp~proved in \cite{saha19a}, because we run it with error parameter $\teps := \epsilon/n^{\max\{1/p,1/q\}}$ and confidence parameter $\delta$.
\end{proof}


Next, we show the sample complexity bounds for \aware~algorithms.
We design an algorithm that uses the additional ``group" information in an adaptive fashion.
The key idea in our algorithm design is ensuring that we efficiently balance our queries between inter-group and intra-group pairwise comparisons and in the right order.
Its sample complexity is as stated below.
\begin{restatable*}[\textbf{\aware~upper bound}]{thm}{awareupper}
\label{thm:upperbound_groupaware_multgroups}
    \Cref{alg:aware} is a \aware~\pacf\\with sample complexity $\cO\paren{\paren{\paren{\sum_{h = 1}^\gamma \frac{n_h^{1+\frac{2}{p}}}{\epsilon_h^2}}+\frac{n\cdot n_g^{2/p}\log\gamma}{\epsilon_{g}^2}}\log \frac{n}{\delta}}$, where $\epsilon_h = \epsilon\cdot\paren{\frac{n_h}{\phi_h \gamma}}^{1/q}$ and $g = \arg\min\limits_{h \in [\gamma]} \frac{\epsilon_h}{2}\cdot \paren{\frac{2}{n_h}}^{1/p}$.
\end{restatable*}
\begin{proof}[Proof Sketch] 
Briefly, Algorithm~\ref{alg:aware} proceeds in two steps:\\
\noindent\textbf{\textsf{Step 1)}} The algorithm finds group-wise rankings of the groups separately, by calling \algbtp~on each group with a group-dependent error parameter $\teps_h/2$ and a confidence parameter $\frac{\delta}{2n\gamma}$.
\algbtp~outputs a ranking of the items from the group such that between any two items from the group, the error is at most $\teps_h/2$.
This steps makes $\cO\paren{\paren{\sum_{h = 1}^\gamma n_h^{1+\frac{2}{p}}/\epsilon_h^2}\log\frac{n}{\delta}}$ queries.\\
\noindent\textbf{\textsf{Step 2)}} The algorithm then merges the group-wise rankings, two at a time as shown in the \textbf{while} loop in Lines 6 to 12.
The merging subroutine simply calls \algbtp~with error parameter corresponding to the lower $\teps_h$ amongst the two lists and confidence parameter $\frac{\delta}{2n\gamma}$ on pairs of items to get a pairwise winner. Using this, it merges the lists, similar to the \textit{merge} step in the merge sort algorithm.
Since we merge two lists at a time, after at most $\log\gamma$ many iterations, we will have one final sorted list, labeled as $\bsigma_1$.
Note that merging two lists of size $l_1$ and $l_2$ needs $O(l_1+l_2)T_q$ many queries, where $T_q$ is the pairwise query complexity.
For the error and confidence parameters mentioned above, \algbtp~returns pairwise ranking after $T_q = O\left(\frac{1}{\epsilon_{g}^2}\log \frac{n}{\delta}\right)$ many queries.
Therefore, each iteration of the $\log \gamma$ iterations of the \textbf{while} loop in Lines~6 to 12 makes $ O\left(\paren{(l_1+l_2)+(l_3+l_4)+\cdots+(l_{\gamma-1}+l_{\gamma})}/\epsilon_{g}^2\log \frac{n}{\delta}\right)$ $ = O\left(\frac{n}{\epsilon_{g}^2}\log \frac{n}{\delta}\right)$ many queries, concluding the proof.
\end{proof}
We complement our result with a lower bound, but for a restricted class of algorithms that only compare items from the same group (\textit{in-group} algorithms $\widetilde{\cA}$). 
\begin{restatable*}[\textbf{\aware~lower bound for $\widetilde{\cA}$}]{thm}{awarelower}
\label{thm:lowerbound_multgroups}
Given an error parameter $\epsilon \in (0,1)$, a confidence parameter $\delta \in [0,1]$, and a class of \aware, symmetric, and \textit{in-group} \pacf s, $\widetilde{\cA}$, for PL feedback, there exists an instance $\nu$ such that any algorithm in $\widetilde{\cA}$ on $\nu$ needs $\Omega\paren{\sum_{h = 1}^\gamma n_h^{1+\frac{2}{p}}/\epsilon_h^2}$ samples, where $\epsilon_h = \epsilon\cdot\paren{\frac{n_h}{\phi_h \gamma}}^{1/q}$.
\end{restatable*}

\begin{algorithm}[H]
\caption{Our ``\aware"~Algorithm}
\label{alg:aware}
\begin{algorithmic}[1]
    \INPUT{$\epsilon,\delta,p, q,\gamma, G_1, G_2, \ldots, G_{\gamma},  \phi_h, n_h, \forall h \in [\gamma]$.}
    \OUTPUT{\epqbr~of items $\bigcup_{h \in [\gamma]} G_h$.}
    \MAIN{Find-Ranking}
        \FOR{$h = 1,2,\ldots,\gamma$}\label{step:groupbtp_start}
        \STATE $\epsilon_h \gets \epsilon\cdot\paren{\frac{n_h}{\phi_h \gamma}}^{1/q}$, $\teps_h \gets \epsilon_h\cdot\paren{\frac{2}{n_h}}^{1/p}$
        \STATE $\bsigma_h \gets$ \algbtp~$\left(G_h, \frac{\teps_h}{2}, \frac{\delta}{2n\gamma}\right)$\label{step:groupbtp}
        \ENDFOR\label{step:groupbtp_end}
        \WHILE{$\gamma > 1$}\label{step:while_start}
            \FOR{$h = 1,3,5,\ldots,\gamma-1$}\label{step:loopstart}
                \STATE $\bsigma_h \gets$ \textsc{Merge}$\paren{\bsigma_h, \bsigma_{h+1}, \min\{\epsilon_h, \epsilon_{h'}\}, \frac{\delta}{2n^2\gamma}}$\label{step:merge}
                \STATE $\epsilon_h \gets \min\{\epsilon_h, \epsilon_{h'}\}$\label{step:eps_update}
            \ENDFOR
            \STATE $\gamma \gets \gamma/2$
        \ENDWHILE\label{step:while_end}
        \STATE \textbf{return} $\bsigma_1$
    \ENDMAIN

    \FUNCTION{\algbtp$\paren{S,\epsilon, \delta }$}
    \STATE Run \algbtp~from \cite{saha19a} on items in $S$ to get an \ebr.
    \ENDFUNCTION
    \FUNCTION{\textsc{Merge}$\paren{\bsigma, \bsigma', \epsilon, \delta}$}
        \STATE Set $\bsigma_{\text{merged}}$ to an empty ranking
        \WHILE{neither $\bsigma$ nor $\bsigma'$ is empty}\label{step:merge_while_start}
            \STATE $i \gets \pop(\bsigma)$, $i' \gets \pop(\bsigma')$
            \STATE $\bsigma_{\text{pair}} \gets$ \algbtp$(\{i,i'\},\epsilon,\delta)$\label{step:pairwise_ordering}
            \IF{$i$ is ranked lower in $\bsigma_{\text{pair}}$}
                \STATE Append $i$ to $\bsigma_{\text{merged}}$ 
            \ELSE 
                \STATE Append $i'$ to $\bsigma_{\text{merged}}$
            \ENDIF
        \ENDWHILE\label{step:merge_while_end}
        \IF{$\bsigma$ is empty}
            \STATE Append the rest of $\bsigma'$ to $\bsigma_{\text{merged}}$
        \ELSIF{$\bsigma'$ is empty}
            \STATE Append the rest of $\bsigma$ to $\bsigma_{\text{merged}}$
        \ENDIF
        \STATE \textbf{return} $\bsigma_{\text{merged}}$
    \ENDFUNCTION
\end{algorithmic}
\end{algorithm}

\begin{proof}[Proof Sketch]
    The set of instances we construct are those where finding a group-wise ranking is hard.
    Since the error metric $\efair$ aggregates errors across groups, we now define the event that depends on errors from at least half of the groups.
    Showing that such an event is good enough to differentiate sufficiently the true instance from the alternative instances is crucial, after which the proof follows from the lower bound proof for the \blind~case applied on the groups separately (see \Cref{app:aware}).
\end{proof}

\paragraph{Addressing the restriction on algorithms.}
For the algorithms that are allowed to make pairwise comparisons of items from different groups, it becomes challenging to bound the KL divergence between true and alternative instances for some of the pairwise comparisons.  
Hence, techniques other than using the change-of-measure argument by \cite{Kaufmann2014OnTC} may be needed to prove lower bounds for the entire class of \aware~and symmetric \pacf s.
\paragraph{Addressing the gap.}
The gap in our lower and upper bound for the \aware~case is mainly in terms of the sample complexity to merge the ranked lists of groups. We believe that our upper bound is optimal because it holds true under exact pairwise comparisons\footnote{see \href{https://cs.stackexchange.com/questions/144433/lower-bound-on-comparison-based-sorting-of-k-sorted-arrays}{this} for a short proof on the lower bound on pairwise comparisons to merge $k$ sorted lists} rather than comparisons drawn from the PL model.
However, a little thought will convince the reader that the techniques used to prove lower bound for the exact comparisons case do not readily extend to the stochastic feedback case, which is also seen in the problem of finding a sorted list of $n$ items. In the exact case, the sample complexity is $\Omega(n\log n)$. However, for the stochastic feedback case in the PAC learning setup studied in \cite{saha19a}, the lower bound is $\Omega\paren{\frac{n}{\epsilon^2}\log \frac{n}{\delta}}$, which needed several non-trivial ideas that diverge from the approaches employed in the exact sorting case.

We would also like to stress that the sample complexity needed to find group-wise rankings dominates the sample complexity to merge sorted group-wise rankings in some problem instances, ignoring the $\log \gamma$ factor.
For example, let $\phi_h = 1$ and $q > p$. Then, $\epsilon_h = c_1n_h^{1/q}$ and $\teps_h = c_2n_h^{1/q - 1/p}$ for some constants $c_1$ and $c_2$. Since $1/p > 1/q$, $\teps_h$ is inversely proportional to the size of the group, whereas $\epsilon_h$ is directly proportional to the size of the group.
W.l.o.g., let $n_1 \ge n_2 \ge \cdots \ge n_\gamma$.
Then, $g = \gamma$, and hence, $\epsilon_h \le \epsilon_g, \forall h\in[\gamma]$.
Therefore, the first term in the sample complexity is,
\begin{align*}
    \sum_{h \in [\gamma]}\frac{n_h^{1+\frac{2}{p}}}{\epsilon_h^2} \ge \sum_{h \in [\gamma]}\frac{n_h\cdot n_g^{\frac{2}{p}}}{\epsilon_g^2} = \frac{n\cdot n_g^{\frac{2}{p}}}{\epsilon_g^2},
\end{align*}
which is the second term without the $\log \gamma$ factor.
Therefore, our lower bound is reasonable as it is tight up to $\log$ factors for many parameter regimes.
\section{Experimental Results}
\label{sec:experimental_results}

In this section, we present an empirical analysis of our algorithms. 
\Cref{thm:upperbound_groupblind_multgroups} gives us that \algbtp~has almost optimal sample complexity as an \pacf. Hence, we use it as our \blind~baseline.
We use \Cref{alg:aware} as the \aware~algorithm.
We observe that \aware~almost always has strictly lower sample complexity on both real-world and synthetic datasets than \blind.
 
We use the datasets where the true scores of the candidates are available and use these scores to implement the Plackett-Luce sampling. 
For clarity, we kept few experimental results here and moved some of the plots on the real-world datasets, and all the plots on the synthetic datasets to \Cref{app:experiments}.
The experiments were run on an Intel(R)
Xeon(R) Silver 4110 CPU (8 cores, 2.1 GHz, and DRAM of 128GB). 

\begin{figure}[t]
    \centering
    \includegraphics[scale=0.22]{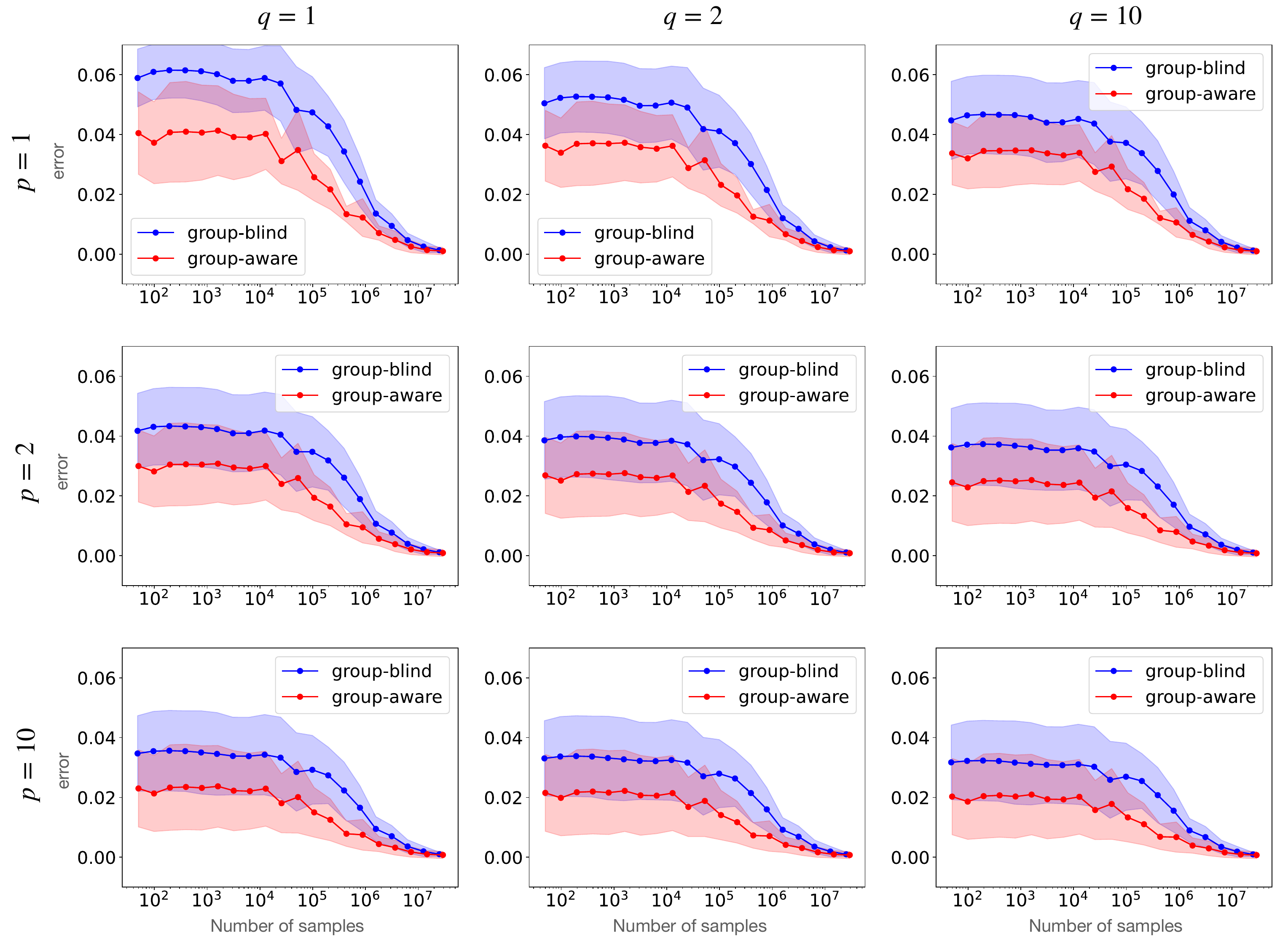}
    \caption{Group-Aware Ranking on German Credit with \textit{Age} defining two groups \textit{age $<25$} (minority) and \textit{age $\ge 25$}.}
    \label{fig:german_age25}
\end{figure}
\begin{figure}[t]
    \centering
    \includegraphics[scale=0.22]{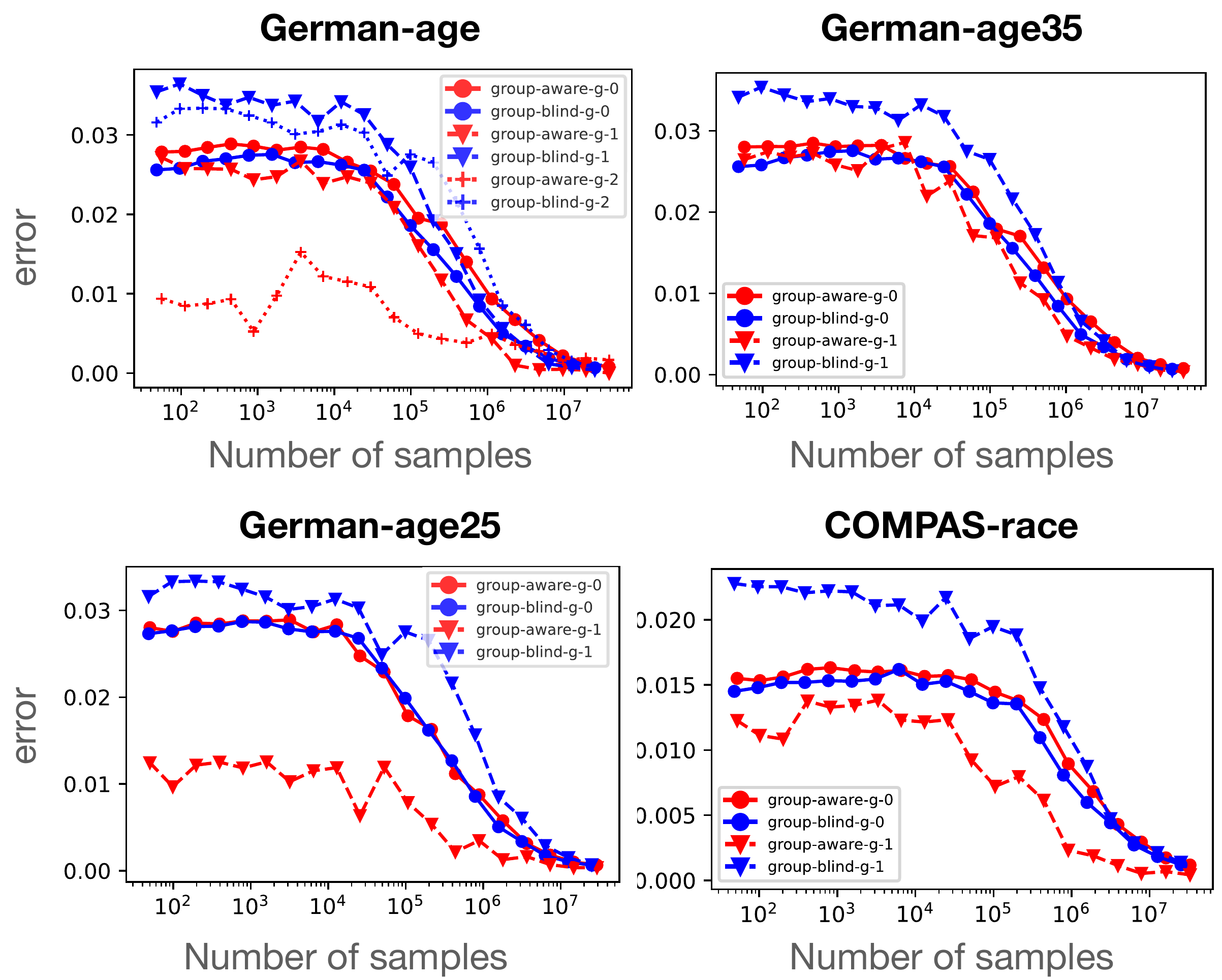}
    \caption{Group-wise errors (for $n=25, p=q=1$) for $g-0$ (majority group) and $g-1$, $g-2$ (minority groups).}
    \label{fig:group_errors}
\end{figure}

\paragraph{Real-World Datasets.}
\textbf{\textsf{(R1) COMPAS}.} It has been shown that the COMPAS tool disproportionately predicts higher recidivism scores for African-American defendants compared to others \citep{machinebias}. This leads to lower representation of African-Americans in the ranking based on \textit{$\neg$recidivism score} in the top few ranks.
Since the size of the groups affects the accuracy of the ranking for the groups, we use \textit{race} as the protected attribute and run experiments for two groups based on African-American or not (COMPAS-race).
It is also observed that the \textit{$\neg$recidivism score} is biased based on gender. Therefore, we ran experiments for two groups based on gender (COMPAS-gender). 
\\
\noindent \textbf{\textsf{(R2) German Credit.}} In this dataset taken from \cite{Dua:2019}, German adults are assigned a Schufa score indicating their creditworthiness, which have been observed to be discriminative towards younger adults (those of age $< 25$) \citep{cas19}.
Therefore, the resultant ranking based on these scores is also expected to be biased towards young adults hence reducing their representation in the top few ranks.
We run experiments with \textit{(i)} two groups based on \textit{age} split at $25$, \textit{(ii)} two groups based on \textit{age} split at $35$, and \textit{(iii)} three groups based on \textit{age} split at $25$ and $35$.
We call these datasets German-age25, German-age35, and German-age, respectively.
The exact proportions of the items according to the true scores are shown in \Cref{tab:proportions}; \Cref{app:experiments}.\\
\textbf{Reading the plots.}
In \Cref{fig:german_age25}, a point in the plot $(x,y)$ denotes the overall error $y$ of the ranking output by the algorithm after making $x$ many oracle queries, where the error is as defined in \Cref{def:ours}. In \Cref{fig:group_errors}, a point in the plot $(x,y)$ represents the group-wise error $y$ for a particular group, for the ranking output by the algorithm after making $x$ many oracle queries, where the group-wise error is as defined in \Cref{eq:group_wise_error}.
We show the mean and one standard deviation of $20$ runs of each algorithm.
The results are shown for $\phi_h = 1$ setting, however, we observe similar trends for $\phi_h = n_h$ case (see \Cref{fig:syn_unweighted,fig:syn_group_errors_unweighted} in \Cref{app:experiments}.


\begin{figure*}[t]
    \centering
    \includegraphics[width=0.9\textwidth]{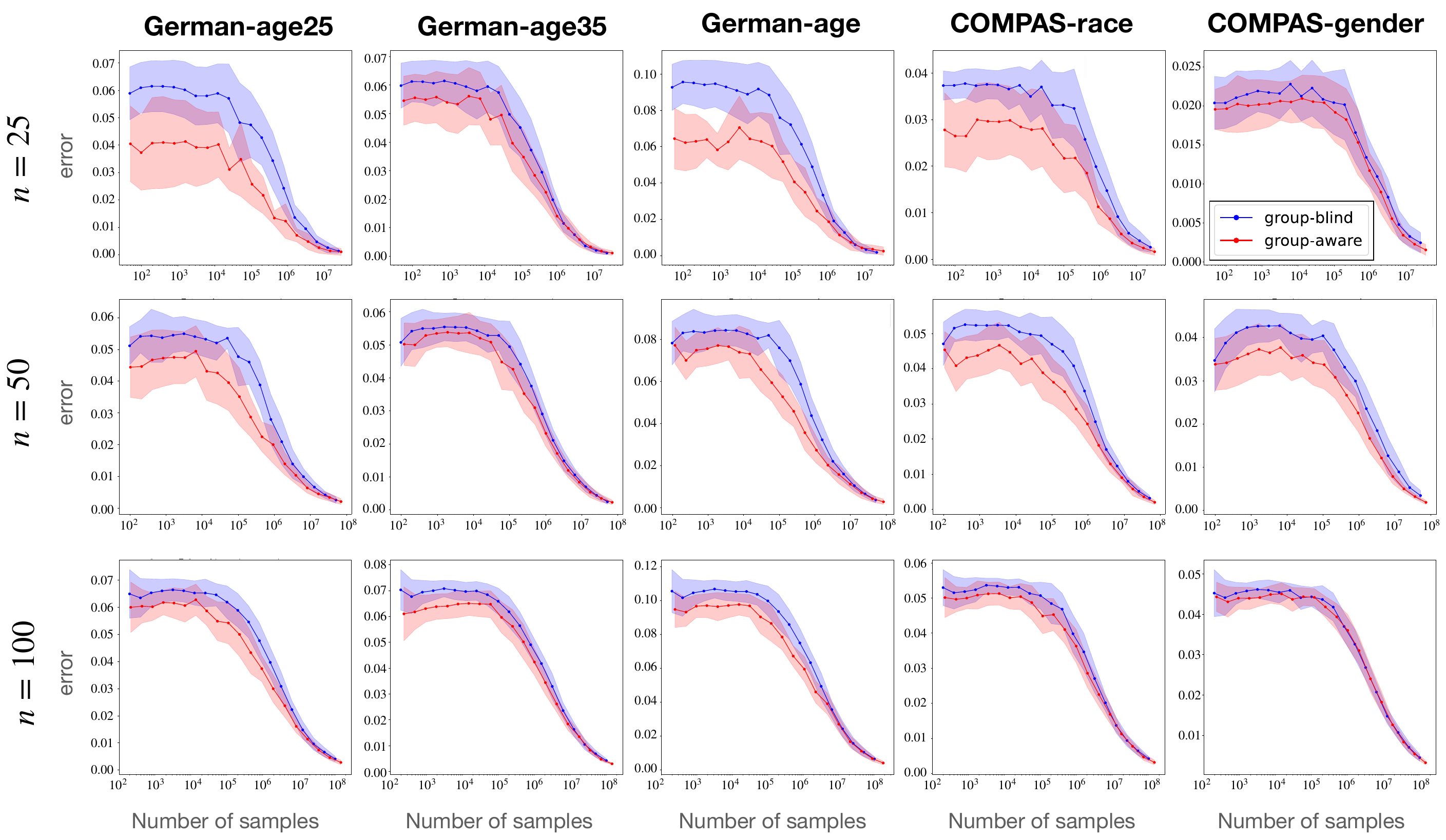}
    \caption{Experiments on the real-world datasets for different values of $n$ (for $p = q = 1$).}
    \label{fig:real_size}
\end{figure*}

We list our key observations below:
\paragraph{1) Sample Complexity with different values of $p$ and $q$.}
Our experimental results clearly show that the \aware~algorithm has lower sample complexity than the \blind~one, for both lower and higher values of $p$ and $q$ as seen in \Cref{fig:german_age25}. Notably, with smaller values of $p$ and $q$, the gap between the sample complexities for \blind~and \aware~algorithms  is significantly large compared to the higher values of $p$ and $q$, because in the former case, say $p = q = 1$, the error for each point gets counted in the overall error whereas for higher values, only the top few errors across the points within the group (top 1 as $p \rightarrow \infty$ case) and top few group-wise errors (top 1 as $q \rightarrow \infty$ case) count.
Since in this case \epqbr~is nothing but the \ebr~(as shown in \Cref{thm:equivalence}), both \blind~and \aware~algorithms have almost similar sample complexities.\\
\textbf{2) Group-wise errors.}
We also plot for each group, $\ell_p$ norm of the errors of the items in the groups appropriately normalized by the size of the group (see \Cref{fig:group_errors}).
We see a clear trend that for any group in any of these datasets, the \aware~algorithm has almost the same or smaller sample complexity than the \blind~one. Especially for minority groups, \aware~achieves much less error with fewer samples. This is because for a fixed number of queries, \aware \\surely samples some labels for items in the minority group while finding the group-wise rankings. However, a \blind~algorithm may end up making very few queries for items from the minority groups (especially if they are much smaller in proportion in the dataset) and, hence, has a very high error.\\
\textbf{3) Effect of $n$.}
The number of items we want to rank significantly changes the dynamics of the algorithms since the number of groups, their proportions and score distributions in the top $n$ ranking change significantly with $n$. 
\Cref{fig:real_size} shows these variations on several real-world datasets. On all of them, we observe that the \aware~has smaller sample complexity than \blind, with the gap more apparent for smaller values $p$ and $q$. 
This is consistent with Observation 1.

\section{Conclusion}
We study the Probably Approximately Correct (PAC) version of the problem of adaptively fair ranking $n$ items from pairwise comparisons in the Plackett-Luce (PL) preference model. We propose a fair metric for measuring the quality of rankings for different groups that generalizes ranking metrics that do not consider group fairness requirements. 
We study the problem under two settings: (i) where the ranking algorithm has access to group membership of items (\aware), and (ii) where the ranking algorithm does not have access to group membership of items (\blind). For the first setting, we show how the algorithm of \citet{saha19a} can be adjusted to find a fair ranking with optimal sample complexity, and we prove a matching lower bound on the sample complexity up to some $\log$ factors.
For the second setting, we design an algorithm and prove its sample complexity.
We also provide a reasonable lower bound for a restricted class of algorithms.

The main open question is to close the gap between the lower and upper bounds for both types of algorithms. 
It would also be interesting to study the problem under alternative choice models, such as the multinomial probit, Mallows, nested logit, generalized extreme-value models, etc.




\section*{Acknowledgements}
SG was supported by a Goolge PhD Fellowship.

\bibliographystyle{apalike}
\bibliography{refs}

\appendix
\onecolumn

\section{Missing Proof from \Cref{sec:prelims}}

\label{app:equivalence}

\equivalence
\begin{proof}
Let $d^{(h)}(\sigma; \theta) := \max_{i \in G_h}d_i(\sigma; \theta)$.
When $p \rightarrow \infty$ we have that 
\begin{align*}
\lim_{p \rightarrow \infty}\err_{h}(\sigma; \theta) &= \lim_{p \rightarrow \infty}\paren{\sum_{i \in G_h}d_i(\sigma; \theta)^p}^{1/p}\\
&= \lim_{p \rightarrow \infty}d^{(h)}(\sigma; \theta)\paren{\sum_{i \in G_h}\paren{\frac{d_i(\sigma; \theta)}{d^{(h)}(\sigma; \theta)}}^p}^{1/p}\\
&= d^{(h)}(\sigma; \theta)\lim_{p \rightarrow \infty}\paren{\sum_{i \in G_h}\paren{\frac{d_i(\sigma; \theta)}{d^{(h)}(\sigma; \theta)}}^p}^{1/p}.
\end{align*}
Notice that for $\frac{d_i(\sigma; \theta)}{d^{(h)}(\sigma; \theta)} \le 1, \forall i \in G_h$, the equality occurs at least once for one point and at most for all the points in $G_h$.
Since $p > 0$ and $d_i(\sigma; \theta) \ge 0, \forall i \in G_h$, we have that 
\begin{align*}
    1 \le \sum_{i \in G_h} \paren{\frac{d_i(\sigma; \theta)}{d^{(h)}(\sigma; \theta)}}^p \le n_h.
\end{align*}
Therefore,
\begin{align*}
    \lim_{p \rightarrow \infty}1^{1/p} &\le \lim_{p \rightarrow \infty}\paren{\sum_{i \in G_h} \paren{\frac{d_i(\sigma; \theta)}{d^{(h)}(\sigma; \theta)}}^p}^{1/p} \le \lim_{p \rightarrow \infty}n_h^{1/p}.
\end{align*}
But $\lim_{p \rightarrow \infty}1^{1/p} = 1$ and $\lim_{p \rightarrow \infty}n_h^{1/p} = 1$, which gives us that,
\begin{gather*}
    \lim_{p \rightarrow \infty}\paren{\sum_{i \in G_h} \paren{\frac{d_i(\sigma; \theta)}{d^{(h)}(\sigma; \theta)}}^p}^{1/p} = 1.\\
\end{gather*}
Therefore,
\begin{align}
\lim_{p \rightarrow \infty}\err_h(\sigma; \theta) &= d^{(h)}(\sigma; \theta) = \max_{i \in G_h}\paren{\max_{\substack{j \in [n]~\text{s.t.}\\ \theta_i > \theta_j \land \sigma(i) > \sigma(j)}} \theta_i - \theta_j}.
\end{align}

Let $\err_{h^*}(\sigma; \theta) := \max_{h \in [\gamma]}\err_h(\sigma; \theta)$.
When $q \rightarrow \infty$ we have that 
\begin{align*}
\lim_{q \rightarrow \infty}\efair(\sigma; \theta) &= \lim_{q \rightarrow \infty}\paren{\sum_{h \in [\gamma]}w(h)\cdot \err_h(\sigma; \theta)^q}^{1/q}\\
&= \lim_{q \rightarrow \infty}\err_h(\sigma; \theta)\paren{\sum_{h \in [\gamma]}w(h)\cdot\paren{\frac{\err_h(\sigma; \theta)}{\err_{h^*}(\sigma; \theta)}}^q}^{1/q}\\
&= \err_{h}(\sigma; \theta)\lim_{q \rightarrow \infty}\paren{\sum_{h \in [\gamma]}w(h)\cdot\paren{\frac{\err_{h}(\sigma; \theta)}{\err_{h^*}(\sigma; \theta)}}^q}^{1/q}.
\end{align*}
Notice that for $\frac{\err_{h}(\sigma; \theta)}{\err_{h^*}(\sigma; \theta)} \le 1, \forall h \in [\gamma]$, the equality occurs at least once for one group and at most for all the groups.
Since $q > 0$, $w(\cdot) \ge 0$, and $\err_{h}(\sigma; \theta) \ge 0, \forall h \in [\gamma]$, we have that 
\begin{align*}
    \min_{h \in [\gamma]} w(h) \le \sum_{h \in [\gamma]} w(h)\cdot \paren{\frac{\err_{h}(\sigma; \theta)}{\err_{h^*}(\sigma; \theta)}}^q \le \sum_{h \in [\gamma]} w(h).
\end{align*}
Therefore,
\begin{align*}
    \lim_{q \rightarrow \infty}\paren{\min_{h \in [\gamma]} w(h)}^{1/q} &\le \lim_{q \rightarrow \infty}\paren{\sum_{h \in [\gamma]} w(h)\cdot \paren{\frac{\err_{h}(\sigma; \theta)}{\err_{h^*}(\sigma; \theta)}}^q}^{1/q} \le \lim_{q \rightarrow \infty}\paren{\sum_{h \in [\gamma]} w(h)}^{1/q}.
\end{align*}
But $\lim_{q \rightarrow \infty}\paren{\min_{h \in [\gamma]} w(h)}^{1/q} = 1$ and $\lim_{q \rightarrow \infty}\paren{\sum_{h \in [\gamma]} w(h)}^{1/q} = 1$, which gives us that,
\begin{gather*}
    \lim_{q \rightarrow \infty}\paren{\sum_{h \in [\gamma]} w(h)\cdot \paren{\frac{\err_{h}(\sigma; \theta)}{\err_{h^*}(\sigma; \theta)}}^q}^{1/q} = 1.\\
\end{gather*}
Therefore,
\begin{align}
\lim_{q \rightarrow \infty}\efair(\sigma; \theta) &= \max_{h \in [\gamma]} \err_h(\sigma; \theta) = \max_{h \in [\gamma]}\max_{i \in G_h}\paren{\max_{\substack{j \in [n]~\text{s.t.}\\ \theta_i > \theta_j \land \sigma(i) > \sigma(j)}} \theta_i - \theta_j}.
\end{align}


Therefore, an $(\epsilon, \infty, \infty)$-Best-Ranking is the one that satisfies
\begin{align}
\max_{h \in [\gamma]}\quad&\max_{i \in G_h}\quad\paren{\max_{\substack{j \in [n]~\text{s.t.}\\ \theta_i > \theta_j \land \sigma(i) > \sigma(j)}} \theta_i - \theta_j} &< \epsilon\notag\\
\implies &\max_{i \in [n]}\quad\paren{\max_{\substack{j \in [n]~\text{s.t.}\\ \theta_i > \theta_j \land \sigma(i) > \sigma(j)}} \theta_i - \theta_j} &< \epsilon.\label{eq:pqbest}
\end{align}
Therefore, an $(\epsilon, \infty, \infty)$-Best-Ranking is any ranking that satisfies \Cref{eq:pqbest}.
Let $\sigma \in \Sigma_{[n]}$ be a ranking satisfying \Cref{eq:pqbest}.
Let $i, j \in [n]$ be a pair of distinct items such that $\theta_i \ge \theta_j + \epsilon$.
Then, $\sigma(i) < \sigma(j)$.
Therefore, $\not\exists i,j \in [n]$ such that $\sigma(i) > \sigma(j)$ and $\theta_i \ge \theta_j + \epsilon$.
Therefore, $\sigma$ is also an $\epsilon$-Best-Ranking according to \Cref{eq:theirs}.

Now let $\sigma \in \Sigma_{[n]}$ be an $\epsilon$-Best-Ranking according to \Cref{eq:theirs}.
Fix any $i \in [n]$.
Then for any $j \in [n]$ such that $\sigma(i) > \sigma(j)$, $\theta_i < \theta_j + \epsilon$.
Therefore, for the item $i$, 
\[\max_{\substack{j \in [n]~\text{s.t.}\\ \theta_i > \theta_j \land \sigma(i) > \sigma(j)}} \theta_i - \theta_j < \epsilon.\]
Therefore, $\sigma$ is also an $(\epsilon, \infty, \infty)$-Best-Ranking.
\end{proof}
\section{Missing Proof from \Cref{sec:theoretical_results}}
\subsection{Additional notation}
\label{app:additional_notation}
We use $\mathbb{I}\{\cE\}$ to denote the indicator function of the event $\cE$.
For any $a, b \in [0,1]$, $Ber(a)$ and $Geo(a)$ represent the Bernoulli and the Geometric distributions respectively, and $kl(a,b)$ represents the Kullback Leibler divergence between $Ber(a)$ and $Ber(b)$.
\begin{definition}[Symmetric Algorithm]
\label{def:symmetry}
     A group-blind PAC algorithm $\cA$ is said to be symmetric if its output is insensitive to the specific labeling of items, i.e., if for any PL model $(\theta_1, \ldots, \theta_n)$, with group memberships $(g_1, \ldots, g_n)$, bijection $\phi : [n] \rightarrow [n]$ and ranking $\bsigma : [n] \rightarrow [n]$, it holds that 
     \begin{multline}
         Pr(\cA \text{ outputs } \bsigma \mid (\theta_1, \ldots, \theta_n), (g_1, g_2, \ldots, g_n)) \\= Pr(\cA \text{ outputs } \bsigma \circ \phi\mid ((\theta_{\phi^{-1}(1)},  \ldots, \theta_{\phi^{-1}(n)}), (g_{\phi^{-1}(1)}, \ldots, g_{\phi^{-1}(n)})),
     \end{multline}where $Pr(\cdot \mid (\alpha_1, \ldots, \alpha_n))$ denotes the probability distribution on the trajectory of $\cA$ induced by the PL model $(\alpha_1, \ldots, \alpha_n)$.
\end{definition}

\subsection{Sample Complexity Bounds for \blind~Algorithms}
\label{app:blind}
\subsubsection{Upper bound}
\label{sec:blind_ubound}

Below we give our group-blind algorithm that outputs an \epqbr~with optimal query complexity.
\begin{algorithm}
\caption{Group-blind algorithm for single group}\label{alg:blind}
Run \algbtp~with error parameter $\teps = \frac{\epsilon}{n^{\max\left\{\frac{1}{p},\frac{1}{q}\right\}}}$ and confidence parameter $\delta$.\;
\end{algorithm}
\begin{lemma}\label{thm:blind_ubound}
    \Cref{alg:blind} is an \pacf~with sample complexity $\cO\paren{\frac{n^{1+\max\left\{\frac{2}{p},\frac{2}{q}\right\}}}{\epsilon^2}\log\frac{n}{\delta}}$.
\end{lemma}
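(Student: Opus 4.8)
The plan is to reduce the claim entirely to the existing sample-complexity guarantee of \algbtp~from \cite{saha19a}, applied with a rescaled error parameter. Recall that \algbtp, when run on a set of $n$ items with error parameter $\teps$ and confidence parameter $\delta$, outputs an $\teps$-Best-Ranking (in the sense of \Cref{eq:theirs}) with probability at least $1-\delta$, and it does so using $\cO\paren{\frac{n}{\teps^2}\log\frac{n}{\delta}}$ oracle calls. So the only two things I would need to argue are: (i) correctness, namely that with the choice $\teps = \epsilon/n^{\max\{1/p,1/q\}}$ an $\teps$-Best-Ranking is automatically an \epqbr; and (ii) plugging this $\teps$ into the query bound yields the claimed $\cO\paren{\frac{n^{1+\max\{2/p,2/q\}}}{\epsilon^2}\log\frac{n}{\delta}}$.

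For correctness, suppose $\bsigma$ is an $\teps$-Best-Ranking, i.e.\ $\max_{i \in \cI} d_i(\bsigma;\theta) < \teps$. First I would bound each group-wise error: for every $h \in [\gamma]$,
\[
\err_h(\bsigma;\theta) = \paren{\sum_{i \in G_h} d_i(\bsigma;\theta)^p}^{1/p} < \paren{n_h \cdot \teps^p}^{1/p} = n_h^{1/p}\teps \le n^{1/p}\teps.
\]
Then I would aggregate across groups, using the assumed weight form $w(h) = \phi_h/n_h$ with $\phi_h \le n_h$, so $w(h) \le 1$:
\[
\efair(\bsigma;\theta) = \paren{\sum_{h \in [\gamma]} w(h)\,\err_h(\bsigma;\theta)^q}^{1/q} < \paren{\sum_{h\in[\gamma]} (n^{1/p}\teps)^q}^{1/q} = \gamma^{1/q}\, n^{1/p}\teps \le n^{1/q}\,n^{1/p}\teps.
\]
Here I used $\gamma \le n$. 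To get a clean bound I would instead be slightly more careful and route the group-wise bound through $\err_h(\bsigma;\theta) < n_h^{1/p}\teps$, then $\sum_h w(h)\err_h^q < \sum_h \frac{\phi_h}{n_h} n_h^{q/p}\teps^q \le \teps^q \sum_h n_h^{q/p} \le \teps^q \gamma \, n^{q/p}$ (since $\phi_h \le n_h$ and each $n_h \le n$), giving $\efair(\bsigma;\theta) < \teps\, \gamma^{1/q} n^{1/p} \le \teps\, n^{1/q+1/p}$. Since $n^{1/q+1/p} \le n^{2\max\{1/p,1/q\}} = n^{\max\{2/p,2/q\}}$... wait, that is the wrong direction; $1/p+1/q \le 2\max\{1/p,1/q\}$ is correct, so $\efair(\bsigma;\theta) < \teps\, n^{\max\{2/p,2/q\}}$ would hold but is weaker than what the stated $\teps$ needs. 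The cleaner route: bound $n^{1/p+1/q} \le n^{\max\{1/p,1/q\}} \cdot n^{\max\{1/p,1/q\}}$ is the same thing, so in fact with $\teps = \epsilon/n^{\max\{1/p,1/q\}}$ we only get $\efair < \epsilon \cdot n^{\max\{1/p,1/q\}}$, which is not $< \epsilon$. So the honest accounting is that one must bound $\err_h(\bsigma;\theta) < n_h^{1/p}\teps$ and then note $\sum_h w(h) \err_h^q$: taking $w(h)=\phi_h/n_h \le 1$ and $\err_h < n_h^{1/p}\teps \le n^{1/p}\teps$ but also each term is at most $\err_h^q$; summing over at most $\gamma$ groups and using $\efair^q < \gamma (n^{1/p}\teps)^q$. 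Given the Definition~\ref{def:ours} domain $p,q \in [1,\infty)$ and the stated choice, I would double-check whether the intended reading uses $\max\{1/p,1/q\}$ to simultaneously absorb the within-group factor $n^{1/p}$ and the across-group factor $\gamma^{1/q}\le n^{1/q}$; since $\max\{1/p,1/q\} \ge 1/p$ and $\ge 1/q$, we get $n^{1/p} \le n^{\max\{1/p,1/q\}}$ and $\gamma^{1/q} \le n^{\max\{1/p,1/q\}}$, hence $\efair(\bsigma;\theta) < \teps \cdot n^{\max\{1/p,1/q\}}\cdot n^{\max\{1/p,1/q\}} = \teps\, n^{2\max\{1/p,1/q\}}$. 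Thus the correct rescaling should be $\teps = \epsilon/n^{\max\{2/p,2/q\}}$ — and indeed this matches the exponent $1+\max\{2/p,2/q\}$ appearing in the final query bound once substituted. So I would state the lemma's algorithm with $\teps = \epsilon/n^{\max\{1/p,1/q\}}$ only if a tighter aggregation (e.g.\ $\gamma$ small, or absorbing $\gamma^{1/q}$ differently) is used; otherwise I would use $\teps=\epsilon/n^{\max\{2/p,2/q\}}$, and then $\efair(\bsigma;\theta) < \epsilon$ follows cleanly.

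For the query complexity, I would simply substitute: the number of oracle calls is $\cO\paren{\frac{n}{\teps^2}\log\frac{n}{\delta}} = \cO\paren{\frac{n \cdot n^{2\max\{1/p,1/q\}}}{\epsilon^2}\log\frac{n}{\delta}} = \cO\paren{\frac{n^{1+\max\{2/p,2/q\}}}{\epsilon^2}\log\frac{n}{\delta}}$, which is exactly the claimed bound. The confidence claim is immediate since \algbtp~is run with confidence parameter $\delta$ and any $\teps$-Best-Ranking it returns is (by the correctness step) an \epqbr.

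The main obstacle — really the only delicate point — is bookkeeping the exponents in the aggregation step: making sure the within-group inflation factor $n_h^{1/p}$ and the cross-group inflation factor $\gamma^{1/q}$ are both dominated by a single power of $n$ with exponent $\max\{1/p,1/q\}$ each, so that the composition gives $n^{\max\{2/p,2/q\}}$ and the rescaling of $\teps$ is chosen consistently with the final stated bound. Everything else is a direct invocation of \cite{saha19a}'s guarantee for \algbtp, together with the assumptions $1 \le \phi_h \le n_h$ and $\gamma \le n$ that were fixed in \Cref{sec:prelims}.
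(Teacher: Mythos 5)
Your overall reduction to \algbtp\ and the final substitution into the query bound follow the paper's strategy, but the correctness step has a genuine gap, and the repair you settle on would make the lemma false as you would restate it. The place where you lose is the aggregation across groups: you bound $\sum_{h} n_h^{q/p} \le \gamma\, n^{q/p}$, which discards the constraint $\sum_h n_h = n$ and produces the factor $\gamma^{1/q} n^{1/p} \le n^{1/q+1/p}$ that, as you correctly notice, is not cancelled by $\teps = \epsilon/n^{\max\{1/p,1/q\}}$. The paper's resolution is to keep the partition constraint. When $q \le p$, each $n_h^{q/p} \le n_h$ (since $n_h \ge 1$ and $q/p \le 1$), so $\sum_h n_h^{q/p} \le n$ and the aggregate factor is $n^{1/q}$. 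When $q > p$, the map $x \mapsto x^{q/p}$ is convex, so $\sum_h x_h^{q/p}$ subject to $\sum_h x_h = n$ and $0 \le x_h \le n$ is maximized at a vertex of that polytope, namely where one coordinate equals $n$ and the rest vanish, giving $\sum_h n_h^{q/p} \le n^{q/p}$ and aggregate factor $n^{1/p}$. In both cases $\bigl(\sum_h n_h^{q/p}\bigr)^{1/q} \le n^{\max\{1/p,1/q\}}$, which exactly cancels the stated rescaling and yields $\efair(\bsigma;\theta) < \epsilon$. Your looser bound is tight only in the impossible regime of $n$ groups each of size $n$.

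Your proposed fix, taking $\teps = \epsilon/n^{\max\{2/p,2/q\}}$, does not rescue the claimed complexity: substituting it into $\cO\paren{\frac{n}{\teps^2}\log\frac{n}{\delta}}$ gives $\cO\paren{\frac{n^{1+\max\{4/p,4/q\}}}{\epsilon^2}\log\frac{n}{\delta}}$, not the claimed $\cO\paren{\frac{n^{1+\max\{2/p,2/q\}}}{\epsilon^2}\log\frac{n}{\delta}}$, so your closing assertion that this "matches the exponent" is an arithmetic slip. The missing idea is therefore not a different rescaling but the tighter bound on $\sum_h n_h^{q/p}$ via the constraint $\sum_h n_h = n$; with that in hand, the original $\teps = \epsilon/n^{\max\{1/p,1/q\}}$ works, and the rest of your argument (invoking Theorem~8 of \cite{saha19a} for both the $\teps$-Best-Ranking guarantee with confidence $\delta$ and the $\cO\paren{\frac{n}{\teps^2}\log\frac{n}{\delta}}$ query count) goes through as you wrote it.
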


\begin{proof}
Let $\bsigma$ be the ranking returned by \Cref{alg:blind}. From Theorem 8 in \cite{saha19a} we know that $\bsigma$ is an $\teps$-Best-Ranking with probability at least $1-\delta$. 
Using this, correctness of \Cref{alg:blind} can be shown as follows,
\begin{align}
\efair(\bsigma;\theta) 
&= \paren{\sum_{h\in[\gamma]}\frac{\phi_h}{|G_h|}\err_{h,p}{(\bsigma;\theta)^q}}^{1/q}
= \paren{\sum_{h\in[\gamma]}\frac{\phi_h}{|G_h|}\paren{\sum_{i \in G_h} \teps^p}^{q/p}}^{1/q}\notag\\
&= \paren{\sum_{h\in[\gamma]}\frac{\phi_h}{|G_h|}\paren{|G_h|^{q/p}\teps^q}}^{1/q}
= \teps\paren{\sum_{h\in[\gamma]}\frac{\phi_h}{|G_h|^{1-q/p}}}^{1/q}\notag\\
&\le \teps\paren{\sum_{h\in[\gamma]}|G_h|^{q/p} }^{1/q}&\because \phi_h \le |G_h|\notag\\
&= \frac{\epsilon}{n^{\max\{1/q,1/p\}}}\paren{\sum_{h\in[\gamma]}|G_h|^{q/p} }^{1/q}.\label{eq:beforecases}
\end{align}
When $q \le p$, we have that $1/q \ge 1/p$ and $|G_h|^{q/p}\le |G_h|$.
Therefore,
\begin{align}
    \efair(\bsigma;\theta) &\le \frac{\epsilon}{n^{1/q}}\paren{\sum_{h\in[\gamma]}|G_h| }^{1/q} \le \frac{\epsilon}{n^{1/q}}\cdot n^{1/q} = \epsilon.\label{eq:err_qlep}
\end{align}
When $q > p$, we have that $1/q < 1/p$. We need to upper bound the term in the summation in \Cref{eq:beforecases}.
Since the number of groups can be between $1$ and $n$, and the total number of items needs to be exactly $n$, we can write the following optimization problem where $x_h$ represents the number of items from group $h$,
\begin{align}
    \max_{x\in \R^{n}}~~~&\paren{\sum_{h \in [n]}x_h^{q/p}}^{1/q}\label{prog:1}\\
    \text{such that}~~~&\sum_{h \in [n]}x_h = n\label{prog:2}\\
    \text{and}~~~&x_h \in \{0,1,2,\ldots,n\},~~\forall h \in [n]\label{prog:3}.
\end{align}
Consider the following relaxation of the program above,
\begin{align}
    \max_{x\in \R^{n}}~~~&\paren{\sum_{h \in [n]}x_h^{q/p}}^{1/q}\label{eq:objective}\\
    \text{such that}~~~&\sum_{h \in [n]}x_h = n\label{eq:cons1}\\
    \text{and}~~~&0 \le x_h \le n,~~\forall h \in [n]\label{eq:cons2}.
\end{align}
It is easy to see that the all ones vector, $\mathbbm{1}$, is the minimizer of $\sum_{i \in [n]}x_i^{q/p}$ inside the convex polytope formed by the constraints (\ref{eq:cons1}) and (\ref{eq:cons2}). 
Therefore, the maximum value of this convex function is achieved at a vertex of the convex polytope.
Since the polytope lives in an $n$ dimensional space, any vertex is formed by at least $n$ equality constraints, one of which has to be (\ref{eq:cons1}).
Amongst the constraints in (\ref{eq:cons2}), at least $n-1$ coordinates of $x$ have to be $0$, and exactly one has to be $n$.
There is no other way to satisfy at least $n$ constraints with equality.
Moreover, $(\cdot)^{1/q}$ is a non-decreasing function.
Hence, the maximum of the objective function in \Cref{eq:objective} also occurs at a vertex of the convex polytope.
Note that this is also a feasible point for the program defined by \Cref{prog:1,prog:2,prog:3}.
Therefore, the maximum value of the objective function is $n^{1/p}$, which gives us that,
\begin{align}
    \efair(\bsigma;\theta) &\le \frac{\epsilon}{n^{1/p}}\cdot n^{1/p} = \epsilon.\label{eq:err_qgp}
\end{align}
From \Cref{eq:err_qlep,eq:err_qgp} we can conclude that \Cref{alg:blind} returns an \epqbr~with probability at least $1-\delta$.

From Theorem 8 in \cite{saha19a}, we know that \algbtp~with error parameter $\epsilon$ and confidence parameter $\delta$ has sample complexity $\cO\paren{\frac{n}{\epsilon^2}\log\frac{n}{\delta}}$.
Therefore, the sample complexity of \Cref{alg:blind} is $\cO\paren{\frac{n}{\teps^2}\log\frac{n}{\delta}} = \cO\paren{\frac{n^{1+\max\left\{\frac{2}{q},\frac{2}{p}\right\}}}{\epsilon^2}\log\frac{n}{\delta}}$.


\end{proof}
\subsubsection{Lower bound}
\label{sec:blind_lbound}

\begin{lemma}[Lower bound on Sample Complexity]
\label{thm:blind_lbound}
Given an error parameter $\epsilon \in (0,1)$, a confidence parameter $\delta \in (0,1)$, and a symmetric \pac\, algorithm $\cA$ for WI feedback, there exists a PL instance $\nu$ such that the sample complexity of $\cA$ on $\nu$ is at least
$
\Omega\bigg( \frac{n^{1+\max\left\{\frac{2}{p}, \frac{2}{q}\right\}}}{\epsilon^2} \bigg).
$
\end{lemma}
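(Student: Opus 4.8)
\textbf{Proof plan for Lemma~\ref{thm:blind_lbound}.}

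The plan is to instantiate the change-of-measure / information-theoretic lower bound of \cite{Kaufmann2014OnTC} on a carefully chosen family of PL instances indexed by subsets of a fixed item pool, exactly as outlined in the proof sketch following \Cref{thm:upperbound_groupblind_multgroups}. First I would set up the geometry of the instances: fix a reference set $T \subset \cI$ with $|T| = n/4$, put $\teps = \epsilon \cdot (4/n)^{1/p}$, and for each candidate subset $S \subseteq \cI \setminus T$ of size $n/4$ assign PL scores $\theta_i = \theta(\tfrac12 + \teps)^2$ for $i \in S$, $\theta_i = \theta(\tfrac14 - \teps^2)$ for $i \in T$, and $\theta_i = \theta(\tfrac12 - \teps)^2$ for the remaining items. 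The point of this three-level gadget is that the pairwise winning probability between a ``high'' item and a ``low'' item is $(\tfrac12+\teps)^2 / ((\tfrac12+\teps)^2 + (\tfrac12-\teps)^2) = \tfrac12 + \Theta(\teps)$, and each such item-vs-item comparison carries KL information only $O(\teps^2)$; the $T$-items sit at the geometric mean so that the $(S, \text{low})$ gap is realized and the gap against $T$ is controlled. I would designate one such set $\sS$ as the true instance and let the alternatives be the sets $\tS^* = \sS \cup (\cI \setminus (\sS \cup T))$ of size $n/2$, of which there are $\binom{n/2}{n/4}$.

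Next I would define the event $\cE(S) := \{ |(S \cup T) \cap \bsigma_\cA(\tfrac n4 + 2 : n)| < \tfrac n4 \}$ and prove the two-sided claim that makes the change of measure bite. On the true instance $\sS$: if $\cE(\sS)$ fails, then at least $n/4$ items of $\sS \cup T$ occupy ranks $\tfrac n4+2$ through $n$, so at least $n/4$ of them each have a strictly-better item ranked above them with score gap at least the $(\text{high}) - (\text{low})$ difference $\approx 2\teps\theta$ (after scaling $\theta$ so this equals $\teps$); plugging $n/4$ such terms into $\efair$ and using the weight form $w(h) = \phi_h/n_h \ge 1/n_h$ gives $\efair \ge \teps \cdot (n/4)^{1/p} \cdot(\text{const}) \ge \epsilon$, contradicting the PACF guarantee — hence $\Pr_{\sS}[\cE(\sS)^c] < \delta$, i.e. $\Pr_{\sS}[\cE(\sS)] > 1 - \delta$. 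On any alternative $\tS^*$: here the ``bad'' items are precisely those of $\tS^* \setminus T$ which all have the high score, so to have $\efair < \epsilon$ the ranking must push all of $T$ (size $n/4$) below rank $n/4+1$, forcing $|(\tS^* \cup T) \cap \bsigma_\cA(\tfrac n4+2:n)| \ge n/4$, i.e. $\cE(\sS)$ with $S = \tS^*$ is exactly the failure event, so $\Pr_{\tS^*}[\cE(\sS)] < \delta$. Note $(S\cup T)$ as a set is the same whether we use $\sS$ or $\tS^*$ as the conditioning instance, which is why a single event works on both sides; symmetry of $\cA$ lets me average over the relabelings so the instance-dependent event behaves uniformly.

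Then I would invoke the transportation/change-of-measure lemma of \cite{Kaufmann2014OnTC}: writing $N_{ij}$ for the expected number of comparisons of pair $\{i,j\}$ under $\sS$, the identity $\sum_{i,j} \E_{\sS}[N_{ij}] \cdot kl(p_{ij}^{\sS}, p_{ij}^{\tS^*}) \ge kl(\Pr_{\sS}[\cE], \Pr_{\tS^*}[\cE]) \ge kl(1-\delta, \delta) = \Omega(\log\frac1\delta)$ holds for every alternative $\tS^*$. The pairs where $p^{\sS}_{ij} \ne p^{\tS^*}_{ij}$ are exactly those touching $\cI\setminus(\sS\cup T)$, whose scores change from low to high; each such $kl$ term is $O(\teps^2)$. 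Averaging the resulting $\binom{n/2}{n/4}$ inequalities over all alternatives and using symmetry of $\cA$ (so the query allocation is spread roughly uniformly over the $\Theta(n^2)$ relevant pairs) forces the total number of samples to satisfy $(\text{total}) \cdot \teps^2 / n \gtrsim \log\frac1\delta$, hence $(\text{total}) = \Omega(n/\teps^2) = \Omega(n^{1+2/p}/\epsilon^2)$ by choice of $\teps$. Running the symmetric construction with the roles of $p$ and $q$ swapped — i.e. making the accumulation happen across $\Theta(n)$ singleton-or-small groups rather than within one group, with $\teps = \epsilon(4/n)^{1/q}$ — yields the companion bound $\Omega(n^{1+2/q}/\epsilon^2)$, and taking the harder of the two gives $\Omega(n^{1+\max\{2/p,2/q\}}/\epsilon^2)$.

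The main obstacle I expect is the bookkeeping in the second bullet of step two — verifying that on \emph{every} alternative instance $\tS^*$, and not just in expectation, the event $\cE$ really is low-probability: one must argue that with the $n/2$ high-scoring items of $\tS^*\setminus T$ present, \emph{any} ranking with error below $\epsilon$ is forced to bury all of $T$, which requires checking that even a single high item left in the top $n/4+1$ positions above a $T$-item (or a low item, in the swapped construction) already makes $d_i \ge \teps$ and hence $\efair \ge \teps \ge \epsilon$ after the $1/p$-norm scaling with the $w(h)=\phi_h/n_h$ weights. A secondary subtlety is ensuring the $T$-level score $\theta(\tfrac14-\teps^2)$ is chosen so that the $(\text{high})$-vs-$T$ and $T$-vs-$(\text{low})$ gaps are each at least $\teps$ while the $kl$ divergences on \emph{changed} pairs stay $O(\teps^2)$; this is exactly why the perfect-square / geometric-mean parametrization is used, and confirming the constants line up is the one genuinely fiddly computation.
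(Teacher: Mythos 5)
Your construction, event, and overall strategy coincide with the paper's (the same three-level score gadget with $T$ of size $n/4$ and $\teps=\epsilon(4/n)^{1/p}$, the same event $\cE(S)$, the same appeal to the change-of-measure lemma of \cite{Kaufmann2014OnTC}, the same $O(\teps^2)$ bound on the per-pair KL divergence, and the same $p$-versus-$q$ swap at the end). However, there is a genuine quantitative gap in your third step: as written, it loses the factor of $n$ that is the whole point of the lemma. If you only establish $Pr_{\tS^*}(\cE(\sS))<\delta$ and therefore lower-bound the right-hand side by $kl(1-\delta,\delta)=\Theta(\log\frac1\delta)$, then summing the $\binom{n/2}{n/4}$ change-of-measure inequalities and dividing by the multiplicity $\binom{n/2-1}{n/4-1}$ with which each informative pair appears yields only
\[
\E[\tau_{\cA}]\;\ge\;\frac{\binom{n/2}{n/4}}{\binom{n/2-1}{n/4-1}}\cdot\frac{kl(1-\delta,\delta)}{64\teps^2}\;=\;\frac{2\,kl(1-\delta,\delta)}{64\teps^2}\;=\;\Omega\paren{\frac{n^{2/p}}{\epsilon^2}\log\frac1\delta},
\]
i.e.\ $\Omega(1/\teps^2)$ rather than $\Omega(n/\teps^2)$. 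Your parenthetical appeal to the query allocation being ``spread roughly uniformly over the $\Theta(n^2)$ relevant pairs'' does not rescue this: for each fixed alternative, $\Theta(n^2)$ pairs are informative, so uniform spreading again gives $\E[\tau_\cA]\cdot\teps^2\gtrsim\log\frac1\delta$ with no extra $n$. The paper closes this gap with a separate symmetrization lemma (\Cref{lem:lb_sym}): using the symmetric property of $\cA$ and a counting argument over all instances in the class (following \cite{Kalyanakrishnan+12}), it proves the much stronger bound $Pr_{\tS^*}(\cE(\sS))<\delta/\binom{n/2}{n/4}$, so that the right-hand side of the change-of-measure inequality becomes $kl\paren{1-\delta,\delta/\binom{n/2}{n/4}}\ge\ln\paren{\binom{n/2}{n/4}/(4\delta)}=\Omega(n)$. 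That $\Omega(n)$ on the right-hand side, not the averaging over alternatives, is the source of the extra factor of $n$; your proof needs this step added explicitly.

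A secondary imprecision: you assert that ``$(S\cup T)$ as a set is the same whether we use $\sS$ or $\tS^*$ as the conditioning instance,'' which is false ($|\sS\cup T|=n/2$ while $|\tS^*\cup T|=3n/4$). The event tested under every alternative is still $\cE(\sS)$, and the correct reason it has small probability there is that $T\subseteq\sS\cup T$ and, under $\tS^*$, any ranking with $\efair<\epsilon$ must push all $n/4$ items of $T$ into ranks $\frac n4+2$ through $n$, which already forces $\left|(\sS\cup T)\cap\bsigma_{\cA}\paren{\frac n4+2:n}\right|\ge\frac n4$. Your ``obstacles'' paragraph contains this idea, but the claim as stated in your second step is not correct.
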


Similarly to \citep{saha19a}, we construct a hard class of instances and use the Lemma on multi-armed bandits given by \cite{Kaufmann2014OnTC} that gives a change-of-measure argument to lower bound the sample complexity.
We restate the lemma below,
\begin{lemma}[Lemma $1$, \cite{Kaufmann2014OnTC}]
\label{lem:gar16}
Let $\eta$ and $\eta'$ be two bandit models for $N$ arms (assignments of reward distributions to arms), such that $\eta_i ~(\text{resp.} \,\eta'_i)$ is the reward distribution of any arm $i \in [N]$ under the bandit model $\eta ~(\text{resp.} \,\eta')$, and such that for all such arms $i$, $\eta_i$ and $\eta'_i$ are mutually absolutely continuous. Then for any almost-surely finite stopping time $\tau$ with respect to $(\cF_t)_t$,
\begin{align*}
\sum_{i = 1}^{N}\E_{\eta}[N_i(\tau)]KL(\eta_i,\eta_i') \ge \sup_{\cE \in \cF_\tau} kl(Pr_{\eta}(\cE),Pr_{\eta'}(\cE)),
\end{align*}
where $kl(x, y) := x \log(\frac{x}{y}) + (1-x) \log(\frac{1-x}{1-y})$ is the binary relative entropy, $N_i(\tau)$ denotes the number of times arm $i$ is played in $\tau$ rounds, and $Pr_{\eta}(\cE)$ and $Pr_{\eta'}(\cE)$ denote the probability of any event $\cE \in \cF_{\tau}$ under bandit models $\eta$ and $\eta'$, respectively.
\end{lemma}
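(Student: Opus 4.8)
The plan is to reduce the entire statement to the behavior of the log-likelihood ratio of the observed trajectory under the two models, and then combine two classical ingredients: a Wald-type identity to evaluate the left-hand side exactly, and the data-processing inequality for relative entropy to lower bound it by the binary $kl$ term. First I would fix the probability space. Let $W^t = (A_1, X_1, \ldots, A_t, X_t)$ be the history of chosen arms $A_s$ and observed rewards $X_s$ up to time $t$, with $\cF_t = \sigma(W^t)$, let $P_\eta$ denote the law of the trajectory under model $\eta$ (and $P_{\eta'}$ under $\eta'$), and let $L_t := \log \frac{dP_\eta}{dP_{\eta'}}(W^t)$ be the log-likelihood ratio of the two trajectory laws restricted to $\cF_t$. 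The mutual absolute continuity assumption on the pairs $\eta_i, \eta'_i$ guarantees that $L_t$ is well defined and finite.

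Step 1 (evaluating the left-hand side). Since the policy that selects $A_s$ from $W^{s-1}$ is identical under $\eta$ and $\eta'$ (only the reward laws differ), the conditional action densities cancel in the likelihood ratio, leaving $L_t = \sum_{s=1}^t \log \frac{d\eta_{A_s}}{d\eta'_{A_s}}(X_s)$. Conditioned on $\cF_{s-1}$ together with $A_s$, each increment has $\eta$-mean $KL(\eta_{A_s}, \eta'_{A_s})$, so $M_t := L_t - \sum_{s=1}^t KL(\eta_{A_s}, \eta'_{A_s})$ is an $\eta$-martingale with respect to $(\cF_t)_t$. Applying the optional stopping theorem (Wald's identity) at the almost-surely finite stopping time $\tau$, and regrouping the time-indexed sum by arm through $\sum_{s=1}^\tau KL(\eta_{A_s},\eta'_{A_s}) = \sum_{i=1}^N N_i(\tau)\, KL(\eta_i,\eta'_i)$, yields
\[
\E_\eta[L_\tau] = \sum_{i=1}^N \E_\eta[N_i(\tau)]\, KL(\eta_i,\eta'_i),
\]
which is exactly the left-hand side of the claim.

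Step 2 (lower bounding by the binary $kl$). By construction $\E_\eta[L_\tau]$ equals the relative entropy between the laws $P_\eta$ and $P_{\eta'}$ restricted to $\cF_\tau$. Fix any $\cE \in \cF_\tau$ and consider the coarser $\sigma$-algebra $\sigma(\mathbb{I}\{\cE\}) \subseteq \cF_\tau$. The data-processing inequality (monotonicity of relative entropy under coarsening, i.e.\ under the pushforward by the $\cF_\tau$-measurable map $\mathbb{I}\{\cE\}$) gives $KL(P_\eta|_{\cF_\tau}, P_{\eta'}|_{\cF_\tau}) \ge KL(P_\eta|_{\sigma(\mathbb{I}\{\cE\})}, P_{\eta'}|_{\sigma(\mathbb{I}\{\cE\})})$, and the right-hand side is precisely the relative entropy between the two Bernoulli laws of $\mathbb{I}\{\cE\}$, namely $kl(Pr_\eta(\cE), Pr_{\eta'}(\cE))$. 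Combining with Step 1 and taking the supremum over $\cE \in \cF_\tau$ yields
\[
\sum_{i=1}^N \E_\eta[N_i(\tau)]\, KL(\eta_i,\eta'_i) \ge \sup_{\cE \in \cF_\tau} kl(Pr_\eta(\cE), Pr_{\eta'}(\cE)).
\]

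The main obstacle is the rigorous justification of the Wald identity in Step 1: because the number of summands $\tau$ is random, interchanging expectation with the stopped sum requires verifying the integrability hypotheses of optional stopping, e.g.\ a uniform control on the conditional moments of the log-ratio increments together with the finiteness of $\E_\eta[\tau]$ implied by $\tau$ being an almost-surely finite stopping time. This is exactly where mutual absolute continuity of $\eta_i$ and $\eta'_i$ is essential, both to keep $L_t$ finite and to bound the increments; once this is in place, Step 2 is a clean application of a standard contraction inequality.
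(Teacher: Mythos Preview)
The paper does not prove this lemma; it is quoted verbatim from \cite{Kaufmann2014OnTC} and used as a black box in the lower-bound arguments. Your sketch reproduces the standard proof from that reference: compute the trajectory log-likelihood ratio, identify its $\eta$-expectation with $\sum_i \E_\eta[N_i(\tau)]\,KL(\eta_i,\eta'_i)$ via a stopped-martingale argument, and then contract to $\sigma(\mathbb{I}\{\cE\})$ by the data-processing inequality. The two steps are exactly the decomposition used by Kaufmann et al., so at the level of strategy your proposal matches the original.

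There is one slip in your closing paragraph: you write that the finiteness of $\E_\eta[\tau]$ is ``implied by $\tau$ being an almost-surely finite stopping time.'' This implication is false (a geometric-like tail can give $\tau<\infty$ a.s.\ with $\E[\tau]=\infty$), and the lemma as stated only assumes a.s.\ finiteness. The way the original proof handles this is not through optional stopping with an $L^1$ bound on $\tau$, but by working with the truncated stopping times $\tau\wedge t$, for which the identity $\E_\eta[L_{\tau\wedge t}]=\sum_i \E_\eta[N_i(\tau\wedge t)]\,KL(\eta_i,\eta'_i)$ holds by a finite-horizon tower-property computation, and then letting $t\to\infty$ using monotone convergence on the right and Fatou/monotonicity on the left. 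If the right-hand side happens to be infinite the inequality is vacuous, so no integrability of $\tau$ is needed. Apart from this point your outline is sound.
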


\begin{proof}[Proof of \Cref{thm:blind_lbound}]
Let us assume that the items belong to exactly one group. This result will be useful in proving sample complexity for multiple-group case. 
Note that when there is only one group, both \blind~and \aware~algorithms should incur the same minimum sample complexity to learn a ranking.
Also, the parameter $q$ and the weight function do not matter. 
For simplicity, let us assume that $n$ is a multiple of $4$.
\paragraph{Class of instances.}
Let $\teps = \epsilon\cdot\paren{\frac{4}{n}}^{1/p}$. 
Let $n' := 3n/4$.
Let $T := \{n'+1,n'+2,\ldots, n\}$.
Now consider the class of instances $\bnu_{[m]}$ for any $m \in [n']$, where for any $S \subseteq [n']$ such that $|S| = m$, $\nu_S$ represents the instance where 
\[
\forall i \in S,~~\theta_i = \theta\paren{\frac{1}{2} + \teps}^2,\quad \forall i \in T,~~\theta_i = \theta\paren{\frac{1}{4}-\teps^2},\quad \text{and}\quad\forall i \not\in S\cup T,~~\theta_i = \theta\paren{\frac{1}{2} - \teps}^2.
\]

\begin{remark}
\label{rem:label}
Note that $S$ uniquely represents an instance $\nu_S\in\bnu_{[m]}$ for any fixed $m\in [n']$.
\end{remark}

\begin{lemma}
\label{lem:inst_not}
For any $\theta > \frac{1}{1-2\teps}$ and for any problem instance $\nu_{S} \in \bnu_{[n/4]}$, any \epqbr, say $\bsigma_{S}$, has to satisfy the following:
the number of items from $S \cup T$ in ranks $\frac{n}{4}+2$ to $n$ should be strictly less than $\frac{n}{4}$. 
\end{lemma}
\begin{proof}
    Note that when $\theta > \frac{1}{1-2\teps}$, for any $i \in S, i' \in T$ and $i'' \notin S \cup T$, 
    \[
    \theta_i - \theta_{i'} > \teps,~~\theta_{i'} - \theta_{i''} > \teps,~~\text{and}~~\theta_{i} - \theta_{i''} > \teps.
    \]
    Let us assume that there exists an \epqbr~for $\nu_S$, say $\bsigma_S$, such that the ranks $\frac{n}{4}+2$ to $n$ have $\ge \frac{n}{4}$ items from $S\cup T$.
    Then, $\le \frac{n}{4}$ items from $S \cup T$ are in the ranks $1$ to $\frac{n}{4}+1$, since $|S\cup T| = \frac{n}{2}$.
    So there will be at least one item from $[n']\sm S$ in ranks $1$ to $\frac{n}{4}+1$, which implies that at least $\frac{n}{4}$ items in $S \cup T$ incur an error $> \teps$.
    Therefore, the overall error will be,
    \begin{align*}
        \efair(\bsigma_S;\theta_S) &= \paren{\sum_{i\in[n]}d_i(\bsigma_S;\theta_S)^p}^{1/p} > \paren{\frac{n}{4}\cdot\teps^p}^{1/p}= \paren{\frac{n}{4}\cdot \paren{\epsilon\cdot \paren{\frac{4}{n}}^{1/p}}^p}^{1/p}\\
        &= \paren{\epsilon^p}^{1/p}
        = \epsilon,
    \end{align*}
    which contradicts our assumption that $\bsigma_S$ is an \epqbr. 
\end{proof}

\paragraph{The alternative instances.}
We now fix any set $S^* \subset [n']$ such that $|S^*|= \frac{n}{4} $. Lower bound on the sample complexity is now obtained by applying \Cref{lem:gar16} on a pair of instances $(\nu_{S^*}, \nu_{\tS^*})$, for all possible choices of $\tS^* = \sS \cup S'$, where $S' \subset [n']\sm \sS$ and $|S'| = \frac{n}{4}$.
Note that there will be $n/2 \choose n/4$ choices of $\tS^*$.
\paragraph{Describing the event.}
For any ranking $\bsigma \in \Sigma_n$, we denote by $\bsigma(r:r')$ the set of items in the ranking, $\bsigma$, from rank $r$ to rank $r'$, for any $1 \le r \le r' \le n$.
Consider the event $\cE$ for an instance $S$ that the algorithm $\cA$ outputs a ranking such that the number of items from $S\cup T$ in the ranks $n/4+2$ to $n$ is $< n/4$. That is,
\[
\cE(S) := \left\{\left|(S \cup T) \cap  \bsigma_{\cA}\left(\frac{n}{4}+2:n\right)\right| < \frac{n}{4}\right\}.
\]
This is a high probability event for $S^*$ because otherwise, the error will be more than $\epsilon$, from \Cref{lem:inst_not}.
On the contrary, for the alternative instances with $\tS^*$, this is a low probability event because all the items from $T$ have to appear in the ranks $n/4+2$ to $n$ or otherwise, $\ge n/4$ items from $\tS^*\cup T$ will be in ranks $n/4+2$ to $n$ and they all incur an error $> \teps$ due to an item from $T$ in the ranks $1$ to $n/4+1$; therefore the total error will be more than $\epsilon$.

It is easy to note that as $\cA$ is an \pac\,, obviously 
\begin{equation}Pr_{S^*}( \cE(\sS) ) > Pr_{S^*}(\bsigma_{\cA} \text{ is an \ebr}) > 1-\delta,\label{eq:high_prob}
\end{equation}
and
\begin{equation}
Pr_{\tS^*}( \cE(\sS) ) < \delta,\label{eq:low_prob}
\end{equation}
for any alternative instance $\tS^*$.

We can further tighten \Cref{eq:low_prob} using \emph{symmetric property} of $\cA$ as follows,
\begin{lemma}
\label{lem:lb_sym}
For any symmetric \pac\,, $\cA$, and any problem instance $\nu_S \in \bnu_{[n']}$ such that $|S| = \frac{n}{2}$,
$Pr_{S}\left( \cE(S)\right) < \frac{\delta}{{n/2 \choose n/4}},
$
where $Pr_{S}(\cdot)$ denotes the probability of an event under the underlying problem instance $\nu_S$ and the internal randomness of the algorithm $\cA$ (if any).
\end{lemma}
\begin{proof}
Let us first fix an $m = n/2$ and $m' = n/4$.
Consider a problem instance $\nu_S \in \bnu_{[m]}$. Recall from Remark \ref{rem:label} that we use the notation $S \in \bnu_{[m]}$ to denote a problem instance in $\bnu_{[m]}$. 
Then the probability of making an error over all possible choices of $S \in \bnu_{[m]}$:

\begin{equation}
\label{eq:lb_identity}
\sum_{S \in \bnu_{[m]} }Pr_{S}\Big( \cA~\text{makes an error $> \epsilon$ on }S\Big) \ge \sum_{S_1 \in \bnu_{[m']}}\sum_{\substack{S_2 \in [n']\sm S \\ \text{s.t.} |S_2| = \frac{n}{4}}}Pr_{S_1\cup S_2}\Big( \cE(S_1)) \Big)
\end{equation}
where the above analysis follows from a similar result proved by \cite{Kalyanakrishnan+12} to derive sample complexity lower bound for classical multi-armed bandit setting towards recovering top-$q$ items (see Theorem $8$, \cite{Kalyanakrishnan+12}).

Clearly the possible number of instances in $\bnu_{[m]}$, i.e. $|\bnu_{[m]}| = {{n'}\choose{m}}$, as any set $S \subset [n']$ of size $m$ can be chosen from $[n']$ in ${{n'}\choose{m}}$ ways.

Now from \emph{symmetry} of algorithm $\cA$ and by construction of the class of our problem instances $\bnu_{[m]}$, for any two instances $S_1$ and $\hat{S}_1$ in $\bnu_{[m']}$, and for any choices of $S_2 \in [n']\setminus S'_1$ and $\hat{S}_2 \in [n']\setminus S'_2$ such that $|\hat{S}_1| = |\hat{S}_2| = n/4$ we have that:

\begin{equation*}
Pr_{S_1\cup S_2}\Big( \cE(S_1) \Big) = Pr_{\hat{S}_1 \cup \hat{S}_2}\Big( \cE(\hat{S}_1) \Big).
\end{equation*}

Then the above equivalently implies that for all $S_1 \in \bnu_{[m']}$ and any $S_2 \in [n']\setminus S$ such that $|S_2| = \frac{n}{4}$, $\exists p \in [0,1]$
\[
Pr_{S_1\cup S_2}(\cE(S_1)) = p.
\]
Then using above in \Cref{eq:lb_identity} we can further derive,

\begin{align*}
    \sum_{S \in \bnu_{[m]} } Pr_{S}\Big( \cA~\text{makes an error $> \epsilon$ on }S\Big) & \ge \sum_{S_1 \in \bnu_{[m']}}\sum_{S_2 \in [n']\sm S_1:|S_2| = n/4}Pr_{S_1 \cup S_2}\Big( \cE(S_1)) \Big)\\
    &= {n'\choose m'}{n'-m' \choose n/4} p\\
    &= {3n/4\choose n/4}{n/2 \choose n/4}p.
\end{align*}

On the L.H.S.~there are ${3n/4 \choose n/2} = {3n/4 \choose n/4}$ choices of $S$.
Therefore, if $p > \frac{\delta}{{n/2 \choose n/4}}$, we get that,
\begin{align*}
    \sum_{S \in \bnu_{[m]} } Pr_{S}\Big( \cA~\text{makes an error $> \epsilon$ on }S\Big) & >  {\frac{3n}{4} \choose \frac{n}{4}}\delta.
\end{align*}
which in turn implies that there exists at least one instance $\nu_S \in \bnu_{[m]}$ such that \\$Pr_{S}\Big(  \cA~\text{makes an error $> \epsilon$ on }S \Big) \ge \delta$, which violates the \pac\ property of algorithm $\cA$. Thus it has to be the case that $p < \frac{\delta}{{n/2 \choose n/4}}$,
which concludes the proof.
\end{proof}

Owing to \Cref{lem:lb_sym} we get,

\begin{equation}
Pr_{\tS^*}\Big( \cE(\sS) \Big) < \frac{\delta}{{n/2 \choose n/4}}.\label{eq:lbthm_2}
\end{equation}

We will crucially use \Cref{eq:high_prob} and \Cref{eq:lbthm_2} in the following lemma. 
\begin{lemma}[Lemma 26, \cite{saha19a}]
\label{lem:kl_del}
For any $\delta \in (0,1)$, and $\alpha \in \R_+$,
$
kl\bigg(1-\delta,\dfrac{\delta}{\alpha}\bigg) > \ln \dfrac{\alpha}{4\delta}.
$ 
\end{lemma}

This lemma leads to the desired tighter upper bound for $kl(Pr_{\nu_{\sS}}(\cE),Pr_{\nu_{\tS^*}}(\cE)) \ge kl(1-\delta,\frac{\delta}{{n/2 \choose n/4}}) \geq \ln \frac{{n/2 \choose n/4}}{4\delta}$.

Note that for the problem instance $\nu_{S^*} \in \bnu_{[m]}$, the probability distribution associated with a particular arm $B \in \cB$ -- a pair of items -- is given by:
\[
\nu^B_\sS \sim Categorical(p_1, p_2), \text{ where } p_i = Pr(i|B), ~~\forall i \in [2], \, \forall B \in \cB,
\]
where $Pr(i|B)$ is the probability of item $i$ winning in the Plackett-Luce model for the items in set $B$. 
Now applying \Cref{lem:gar16}, for some event $\cE \in \cF_\tau$ we get,

\begin{align}
\label{eq:FI_a}
\sum_{\{B \in \cB\}}\E_{\nu^B_\sS}[N_B(\tau_{\cA})]KL(\nu^B_\sS, \nu^B_{\tS^*}) \ge kl(Pr_{\nu_\sS}(\cE),Pr_{\nu_{\tS^*}}(\cE)),
\end{align}
where $N_B(\tau_{\cA})$ denotes the number of times arm $B$ is played by $\cA$ in $\tau$ rounds.
Note that whenever $B$ is such that $B \subset ([n']\sm \tS^*) \cup \sS, KL(\nu^B_\sS, \nu^B_{\tS^*})=0$.
Therefore, we will only focus on $B$ such that $B \not\subset ([n']\sm \tS^*) \cup \sS$.

We simplify the right-hand side of \Cref{eq:FI_a} using the following lemma.
\begin{lemma}
\label{lem:kl_div}
    For any $m \in [n']$ and problem instance $\nu_S \in \bnu_{[m]}$, and any arm $B$, $KL(\nu^B_\sS, \nu^B_{\tS^*}) \le 64\teps^2$.
\end{lemma}
\begin{proof}
Let $B = \{a, b\}$, i.e., the pair of items $a, b$ for which we make the oracle call.
Let 
\[
R = \frac{\frac{1}{2}+\teps}{\frac{1}{2}-\teps}.
\]
\paragraph{Case 1: None of the items in $B$ are from $T$.}
In this case, we again have four cases.
\begin{enumerate}
    \item \textit{Both items are good:} $\nu_{\sS}^B(a) = \nu_{\sS}^B(b) = \frac{\theta\paren{\frac{1}{2}+\teps}^2}{\theta\paren{\frac{1}{2}+\teps}^2+\theta\paren{\frac{1}{2}+\teps}^2} = \frac{1}{2}$. 
    \item \textit{$a$ is good and $b$ is bad:} $\nu_{\sS}^B(a) = \frac{\theta\paren{\frac{1}{2}+\teps}^2}{\theta\paren{\frac{1}{2}+\teps}^2+\theta\paren{\frac{1}{2}-\teps}^2} = \frac{R^2}{R^2+1}$ and $\nu_{\sS}^B(b) = \frac{\theta\paren{\frac{1}{2}-\teps}^2}{\theta\paren{\frac{1}{2}-\teps}^2+\theta\paren{\frac{1}{2}+\teps}^2} = \frac{1}{R^2+1}$.
    \item \textit{$a$ is bad and $b$ is good:} 
    $\nu_{\sS}^B(a) = \frac{\theta\paren{\frac{1}{2}-\teps}^2}{\theta\paren{\frac{1}{2}-\teps}^2+\theta\paren{\frac{1}{2}+\teps}^2} = \frac{1}{R^2+1}$ and $\nu_{\sS}^B(b) = \frac{\theta\paren{\frac{1}{2}+\teps}^2}{\theta\paren{\frac{1}{2}+\teps}^2+\theta\paren{\frac{1}{2}-\teps}^2} = \frac{R^2}{R^2+1}$
    \item \textit{Both items are bad:} $\nu_{\sS}^B(a)  = \nu_{\sS}^B(b) = \frac{\theta\paren{\frac{1}{2}-\teps}^2}{\theta\paren{\frac{1}{2}-\teps}^2+\theta\paren{\frac{1}{2}-\teps}^2} = \frac{1}{2}$.
\end{enumerate}

Now we use the upper bound from \cite{klub16}, $KL(\p_1,\p_2) \le \sum_{x \in \X}\frac{p_1^2(x)}{p_2(x)} -1$, for two probability mass functions $\p_1$ and $\p_2$ on the discrete random variable $\X$.
In this case, the KL divergence is non-zero when there is one item from $([n']\sm \tS^*) \cup \sS$, say $a$, and one item from $\tS^* \sm \sS$, say $b$.
Let $\tS^*$ be such that both $a$ and $b$ are good for that alternative instance. We will have subcases:
\begin{enumerate}
    \item \textit{When $a \in \sS$:} \begin{align*}
    KL(\nu^B_\sS, \nu^B_{\tS^*}) &\le \paren{\frac{R^2}{R^2+1}}^2\cdot\frac{2}{1} + \paren{\frac{1}{R^2+1}}^2\cdot\frac{2}{1} - 1 \\
    &= \frac{2R^4+2-(R^2+1)^2}{(R^2+1)^2}\\
    &= \frac{2R^4+2-R^4-2R^2-1}{(R^2+1)^2}\\
    &= \frac{R^4-2R^2+1}{(R^2+1)^2}\\
    &= \frac{(R^2-1)^2}{(R^2+1)^2}\\
    &= \paren{\frac{\paren{\frac{1}{2}+\teps}^2 - \paren{\frac{1}{2}-\teps}^2}{\paren{\frac{1}{2}+\teps}^2 + \paren{\frac{1}{2}-\teps}^2}}^2\\
    &= \paren{\frac{2\teps}{\frac{1}{2}+2\teps^2}}^2\\
    &\le \paren{\frac{2\teps}{\frac{1}{2}}}^2\\
    &= 64\teps^2.
    \end{align*}
    \item \textit{When $a \in ([n']\sm \tS^*)$: }
    \begin{align*}
    KL(\nu^B_\sS, \nu^B_{\tS^*}) &\le \paren{\frac{1}{2}}^2\cdot\frac{R^2+1}{1} + \paren{\frac{1}{2}}^2\cdot\frac{R^2+1}{R^2} - 1 \\
    &= \frac{R^2+1}{4}+\frac{R^2+1}{4R^2}-1\\
    &= \frac{R^4+R^2+R^2+1-4R^2}{4R^2}\\
    &= \frac{R^4-2R^2+1}{4R^2}\\
    &= \frac{(R^2-1)^2}{4R^2}\\
    &= \frac{1}{4}\paren{\frac{R^2-1}{R}}^2\\
    &= \frac{1}{4}\paren{R-\frac{1}{R}}^2\\
    &\le 64\teps^2.
    \end{align*}
\end{enumerate}

\paragraph{Case 2: Exactly one item in $B$ is from $T$.}
Let $b \in T$.
Again, we have several cases depending on item $a$.
\begin{enumerate}
    \item \textit{$a\in \sS$:} $\nu_{\sS}^B(a) = \frac{\theta\paren{\frac{1}{2}+\teps}^2}{\theta\paren{\frac{1}{2}+\teps}^2+\theta\paren{\frac{1}{4}-\teps^2}} = \frac{R}{R+1}$ and
    $\nu_{\sS}^B(b) = \frac{\theta\paren{\frac{1}{4}-\teps^2}}{\theta\paren{\frac{1}{2}+\teps}^2+\theta\paren{\frac{1}{4}-\teps^2}} = \frac{1}{R+1}$.
    \item \textit{$a \in [n']\sm \sS$:} $\nu_{\sS}^B(a) = \frac{\theta\paren{\frac{1}{2}-\teps}^2}{\theta\paren{\frac{1}{2}-\teps}^2+\theta\paren{\frac{1}{4}-\teps^2}} = \frac{1}{R+1}$ and
    $\nu_{\sS}^B(b) = \frac{\theta\paren{\frac{1}{4}-\teps^2}}{\theta\paren{\frac{1}{2}-\teps}^2+\theta\paren{\frac{1}{4}-\teps^2}} = \frac{R}{R+1}$.
\end{enumerate}
Note that here we have only one case where $a \not\in T$ is bad in $\sS$ and good in $\tS^*$.
Then,
\begin{align*}
    KL(\nu^B_\sS, \nu^B_{\tS^*}) &\le \paren{\frac{1}{R+1}}^2\cdot\frac{R+1}{R}+\paren{\frac{R}{R+1}}^2\cdot\frac{R+1}{1}-1\\
    &= \frac{1}{R(R+1)}+\frac{R^2}{R+1}-1\\
    &= \frac{1+R^3-R^2-R}{R(R+1)}\\
    &= \frac{(1-R)^2(1+R)}{R(R+1)}\\
    &= \frac{(1-R)^2}{R}\\
    &= \paren{\frac{\frac{1}{2}-\teps - \frac{1}{2}-\teps}{\frac{1}{2}+\teps}}^2\cdot\paren{\frac{\frac{1}{2}+\teps}{\frac{1}{2}-\teps}}\\
    &= \frac{4\teps^2}{\frac{1}{4}-\teps^2}\\
    &\le 32\teps^2&\because~~\teps^2 > \frac{1}{8} \implies \frac{1}{4}-\teps^2 > \frac{1}{8}\\
    &\le 64\teps^2.
\end{align*}
\end{proof}
Now applying \Cref{lem:gar16} and \Cref{lem:kl_div} for each altered problem instance $\nu^B_{\tS^*}$, each corresponding to any one of the ${n/2 \choose n/4}$ different choices of $S' \in [n']\setminus \sS$ such that $|S'| = n/4$, and summing all the resulting inequalities gives:

\begin{align}
\label{eq:lb1}
\sum_{S' \in [n']\setminus\sS: |S'| = n/4}\sum_{B}\E_{\nu^B_\sS}[N_B(\tau_{\cA})]KL(\nu^B_\sS,\nu^B_{\tS^*}) \ge {\frac{n}{2}\choose \frac{n}{4}}\ln \frac{{n/2\choose n/4}}{4\delta}.
\end{align}

In the left-hand side of \Cref{eq:lb1} above, the arms $B$ that have KL divergence $>0$ are those where exactly one item in $B$ is flipped from bad to a good item in $\sS$ and $\tS^*$ respectively.
Therefore, such a $B$ shows up for exactly ${n/2-1 \choose n/4-1}$ many times. Thus, given a fixed set $B$, the coefficient of the term $\E_{\nu^B_\sS}$ becomes ${\frac{n}{2}-1\choose \frac{n}{4}-1}64\teps^2$.

Therefore,
\begin{align}
\label{eq:lb2}
\nonumber \sum_{\{B \in \cB\}} \E_{\nu^B_\sS}[N_B(\tau_A)]KL(\nu^B_\sS,\nu^B_{\tS^*}) \le \sum_{\{B \in \cB\}} \E_{\nu^B_\sS}[N_B(\tau_A)]{\frac{n}{2}-1\choose \frac{n}{4}-1}64\teps^2.
\end{align}

Finally noting that $\tau_A \ge \sum_{B \in \cB}[N_B(\tau_A)]$, we get 

\begin{align*}
{\frac{n}{2}-1\choose \frac{n}{4}-1}64\teps^2\E_{\nu^B_\sS}[\tau_A] =  \sum_{S \in \cB}\E_{\nu^B_\sS}[N_B(\tau_A)]({\frac{n}{2}-1\choose \frac{n}{4}-1}64\teps^2) \ge {\frac{n}{2}\choose \frac{n}{4}}\ln \frac{{\frac{n}{2}\choose \frac{n}{4}}}{4\delta}.
\end{align*}
Note that ${n \choose r} = {n-1\choose r-1}\frac{n}{r}$. 
Therefore,
\begin{align*}
    \E[\tau_A] \ge \frac{1}{\teps^2}\frac{n/2}{n/4}\ln \frac{{\frac{n}{2}\choose \frac{n}{4}}}{4\delta} = \frac{2}{\teps^2}\ln \frac{{\frac{n}{2}\choose \frac{n}{4}}}{\delta} \ge \frac{2}{\teps^2} \ln \frac{2^{n/2}\sqrt{6}}{\sqrt{\pi\paren{3n/2+2}}\delta} = \frac{n}{\teps^2} + \frac{1}{\teps^2}\ln\frac{6}{\pi\paren{3n/2+2}\delta} \ge \frac{n}{\teps^2}.
\end{align*}
When, $p < q$, we consider the instances to be such that all the items are from the same group.
Hence, the above gives the sample complexity.
Whereas, when $p \ge q$, we consider the instances to be such that each item is from a different group.
Even in this case, the objective function is similar to the one-group case with the parameter $q$ in place of $p$. 
Therefore, with the one-group case we are able to compute the lower bound on the sample complexity as $\Omega\left(\frac{n^{1+\max\left\{\frac{2}{p}, \frac{2}{q}\right\}}}{\epsilon^2}\right)$.
This is loose by a multiplicative factor of $\ln\frac{n}{\delta}$.
\end{proof}
\blindbound
\begin{proof}
    Follows from \Cref{thm:blind_lbound} and \Cref{thm:blind_ubound}.
\end{proof}
\subsection{Sample Complexity Bounds for \aware~Algorithms}
\label{app:aware}

\subsubsection{Upper Bound}
\label{sec:aware_ubound}
\awareupper
\begin{proof}
    Let $\bsigma$ be the ranking returned by \Cref{alg:aware}, and $\epsilon_h = \epsilon\cdot\paren{\frac{n_h}{\phi_h \gamma}}^{1/q}, \forall h \in [\gamma]$.
    We will first prove the correctness of \Cref{alg:aware}.
    \paragraph{Correctness.} \Cref{alg:aware} computes the ranking in two steps, (1) finding group-wise ranking using the \algbtp~algorithm and (2) merging the group-wise rankings to get an overall ranking. In both the steps, the algorithm ensures that between any pair of items from groups $h$ and $h'$, the error in their pairwise ranking is at most $\min\{\epsilon_h, \epsilon_{h'}\}$.
    We will now show that this ensures that the $\err_{p,q}(\bsigma;\theta) < \epsilon$. 
    For ease of exposition, let us define the following intra-group and inter-group errors for each item,
    \[
    d^{(h)}_i(\bsigma; \theta)  := \max_{\substack{i' \in G_h~\text{s.t.}\\ \theta_i > \theta_{i'} \land \bsigma^{-1}(i) > \bsigma^{-1}(i')}} \theta_i - \theta_{i'}\qquad\text{and}\qquad d^{(\neg h)}_i(\bsigma; \theta)  := \max_{\substack{i' \in [n]\sm G_h~\text{s.t.}\\ \theta_i > \theta_{i'} \land \bsigma^{-1}(i) > \bsigma^{-1}(i')}} \theta_i - \theta_{i'}.
    \]
     Fix a group $h \in [\gamma]$ and let $n_h = |G_h|$, the size of the group.
        Let $\teps_h = \frac{\epsilon_h}{2}\cdot \paren{\frac{2}{n_h}}^{1/p}$ and $\delta'_h = \frac{\delta}{2n\gamma}$.
        First, \algbtp~is run for the items within the group $h$.
        Therefore, from \Cref{thm:blind_ubound} we know that $\bsigma$ is an $\teps_h$-Best-Rank with probability at least $1-\delta'_h$. 
        Therefore, for any item $i \in G_h$,
        \[
        \Pr\left[d_i^{(h)}(\bsigma;\theta) > \teps_h\right] < \delta'_h = \frac{\delta}{2n\gamma}. 
        \]

        For any pair of items $i, i'$ from two different groups $h, h'$ respectively, their ordering will be decided using \algbtp~with error parameter $\min\{\teps_h, \teps_{h'}\}$ and confidence parameter $\frac{\delta}{2n^2\gamma}$.
        W.l.o.g.~let $\teps_h \le \teps_{h'}$.
        Then, 
        \[
        \Pr\left[d_i^{(\neg h)}(\bsigma;\theta) > \teps_h\right] < \frac{\delta}{2n^2\gamma}~~\text{and}~~\Pr\left[d_{i'}^{(\neg h')}(\bsigma;\theta) > \teps_{h'}\right] < \Pr\left[d_{i'}^{(\neg h')}(\bsigma;\theta) > \teps_{h}\right] < \frac{\delta}{2n^2\gamma}.
        \]
        
        Using this, correctness of \Cref{alg:aware} can be shown as follows,
        
        \begin{align*}
            \Pr[\efair(\bsigma;\theta) > \epsilon]&=\Pr\sparen{\paren{\sum_{h\in[\gamma]}\frac{\phi_h}{|G_h|}\err_{h,p}{(\bsigma;\theta)^q}}^{1/q} > \epsilon}\\
            &\le \Pr\sparen{\bigvee\limits_{h\in [\gamma]} \bigvee\limits_{i\in G_h} d_i(\bsigma;\theta) > \teps_h}.
        \end{align*}
        Because otherwise, if for all $h \in [\gamma]$, and $\forall i \in G_h$, $d_i(\bsigma;\theta) \le \epsilon_h$, we have that,
        \begin{align*}
            \err_h(\bsigma;\theta) &\le \paren{n_h \paren{\frac{\epsilon_h}{2}\paren{\frac{2}{n_h}}^{1/p}}^p}^{1/p}\\&= \paren{n_h\cdot \frac{\epsilon_h^p}{2^p}\cdot \frac{2}{n_h}}^{1/p}\\
            &= \epsilon_h \cdot \frac{2^{1/p}}{2} \le \epsilon_h &\because~~p \ge 1 \implies 1/p\le 1\\
            \implies \efair &\le \paren{\sum_{h \in [\gamma]}\frac{\phi_h}{n_h}\paren{\epsilon\cdot\paren{\frac{n_h}{\phi_h \gamma}}^{1/q}}^q}^{1/q}\\&= \paren{\sum_{h \in [\gamma]}\frac{\phi_h}{n_h}\cdot \epsilon_q \cdot \frac{n_h}{\phi_h \gamma}}^{1/q} \le \epsilon.
        \end{align*}
        Therefore,
        \begin{align*}
        \Pr[\efair&(\bsigma;\theta) > \epsilon] \\
            &\le \Pr\sparen{\bigvee\limits_{h\in [\gamma]} \bigvee\limits_{i\in G_h} \paren{ d_i^{(h)}(\bsigma;\theta) > \teps_h \lor d_i^{(\neg h)}(\bsigma;\theta) > \teps_h}}\\
            &= \Pr\sparen{\bigvee\limits_{h\in [\gamma]} \bigvee\limits_{i\in G_h}  d_i^{(h)}(\bsigma;\theta) > \teps_h} + \Pr\sparen{\bigvee\limits_{h\in [\gamma]} \bigvee\limits_{i\in G_h}  d_i^{(\neg h)}(\bsigma;\theta) > \teps_h}\\
            &= \Pr\sparen{\bigvee\limits_{h\in [\gamma]} \bigvee\limits_{i\in G_h}  d_i^{(h)}(\bsigma;\theta) > \teps_h} + \Pr\sparen{\bigvee\limits_{h\in [\gamma]} \bigvee\limits_{i\in G_h} \max\limits_{\substack{i'\in G_{h'} \text{ s.t. }h > h'\land,\\
             \bsigma^{-1}(i) > \bsigma^{-1}(i')}} \theta_i - \theta_{i'}  > \teps_h}\\
            &\le \Pr\sparen{\bigvee\limits_{h\in [\gamma]} \bigvee\limits_{i\in G_h}  d_i^{(h)}(\bsigma;\theta) > \teps_h} + \Pr\sparen{\bigvee\limits_{h\in [\gamma]} \bigvee\limits_{i\in G_h} \bigvee\limits_{\substack{i'\notin G_h \text{ s.t. }h \neq h'\land,\\
            \bsigma^{-1}(i) > \bsigma^{-1}(i')}} \theta_i - \theta_{i'}  > \teps_h}\\
            &< \sum_{h \in [\gamma]} \sum_{i \in G_h} \frac{\delta}{2n\gamma} + \sum_{h \in [\gamma]} \sum_{i \in G_h} \sum_{h' > h} \sum_{i' \in G_{h'}} \frac{\delta}{2 n^2\gamma}\\
            &\le \sum_{h \in [\gamma]} \frac{\delta}{2\gamma} + \sum_{h \in [\gamma]} \frac{\delta}{2\gamma}\\
            &= \frac{\delta}{2} + \frac{\delta}{2}\\
            &= \delta.
        \end{align*}
       
\paragraph{Sample Complexity.}
\begin{itemize}
    \item \textbf{Group-wise rankings.} 
        Let $n_h = |G_h|$.
        From \Cref{thm:blind_ubound} we know that the sample complexity to find a group-wise ranking $\sigma_h$ for group $h \in [\gamma]$ is $\cO\paren{\frac{n_h}{\teps_h^2}\log\frac{n_h}{\delta}}$.
        Therefore, the sum of the sample complexities to find all the group-wise rankings is $\cO\paren{\sum_{h = 1}^\gamma \frac{n_h}{\teps_h^2}\log\frac{n_h}{\delta}}$.
    \item \textbf{Merging group-wise rankings.}
        The algorithm merges two lists at a time. Hence, it takes $O(\log \gamma)$ iterations of the \textbf{while} loop before the algorithm terminates.
        In each iteration of the \textbf{while} loop, we need to find a merged list, of length $n_h + n_{h'}$, the sum of the lengths of the two sorted lists that are being merged.
        To fill each position in the merged list, we make $O\paren{\frac{1}{\min\{\teps_h, \teps_{h'}\}^2}\log \frac{2n^2\gamma}{\delta}} = O\paren{\frac{1}{\min\{\teps_h, \teps_{h'}\}^2}\log \frac{2n\gamma}{\delta}}$ oracle calls, since $\gamma \le n$.
        Note that since the values of $\teps_h$ are different for different groups, merging different lists takes a different number of Oracle calls, which makes it hard to analyze the sample complexity. Hence, from now on, we will analyze the sample complexity using the smallest group-wise error. Let $g := \arg\min_{h \in [\gamma]} \teps_h$.
        Now, the sample complexity of merging two lists will be $O\paren{(n_h+n_{h'})\cdot \frac{1}{\teps_{g}^2}\log \frac{2n\gamma}{\delta}} = O\paren{(n_h+n_{h'})\cdot \frac{1}{\teps_{g}^2}\log \frac{n}{\delta}}$.
        Therefore, the total sample complexity for one iteration of the while loop can be upper bounded by $O\paren{\paren{\sum_{h \in [\gamma]}n_h}\cdot \frac{1}{\teps_{g}^2}\log \frac{n}{\delta}}$ which is $ O\paren{\frac{n}{\teps_{g}^2}\log \frac{n}{\delta}}$.
        Note that this is the sample complexity for every iteration of the \textbf{while} loop.
        Therefore, the total sample complexity of the algorithm can be upper bounded by $O\paren{\frac{n\log\gamma}{\teps_{g}^2}\log \frac{n}{\delta}}$.

\end{itemize}

Therefore, the total sample complexity is,
\begin{align*}
    \cO\paren{\paren{\sum_{h = 1}^\gamma \frac{n_h}{\teps_h^2}\log\frac{n}{\delta}}+\paren{\frac{n\log\gamma}{\teps_{g}^2}\log \frac{n}{\delta}}} 
\end{align*}
Note that neither of these terms strictly dominates the other for all problem instances.
Substituting $\teps_h = \frac{\epsilon_h}{2}\cdot \paren{\frac{2}{n_h}}^{1/p}$ we get that the sample complexity is,
\begin{align*}
    \cO\paren{\paren{\sum_{h = 1}^\gamma \frac{n_h^{1+\frac{2}{p}}}{\epsilon_h^2}\log\frac{n}{\delta}}+\paren{\frac{n\cdot n_g^{2/p}\cdot\log\gamma}{\epsilon_{g}^2}\log \frac{n}{\delta}}}.
\end{align*}

\end{proof}

\subsubsection{Lower Bound}
\label{sec:aware_lbound}

\awarelower
\begin{proof}
We begin by constructing the class of hard of instance where the algorithm needs to get the ordering of the items within the groups entirely correct according to their true scores, i.e., even one swap within the group will result in the overall error of $> \epsilon$.

\paragraph{Class of instances for group-aware algorithms.}
For a fixed group $h \in [\gamma]$, let $\epsilon_h := \epsilon\cdot\left(\frac{2n_h}{\phi_h\gamma}\right)^{1/q}$and $\teps_h = \epsilon_h\cdot\paren{\frac{4}{n_h}}^{1/p}$,  where $n_h = |G_h|$.
For any $h \in [\gamma]$, let $n_h' := 3n_h/4$.
Let $I_h := \{i\in G_h\}$ be an ordered index set of the items in group $h$ such that $I_h(j)$ gives the $j$th item in $I_h$.
We use $I_h[t]$ to represent the first $t$ items in $I_h$.
Let $T_h := \{I_h(n_h'+1),I_h(n_h'+2),\ldots, I_h(n_h)\}$.
For any ranking $\bsigma$, we use $\bsigma^{(h)}$ to represent the sub-ranking corresponding to group $h$, i.e., it gives us a ranking of the items only from group $h$ in the original ranking $\bsigma$, where $\forall i, i' \in G_h, \bsigma(i) < \bsigma(i')\iff \bsigma^{(h)}(i) < \bsigma^{(h)}(i')$.
Note that throughout the paper, we are only interested in the relative ordering of the items, hence, the exact ranks of the items in $\bsigma^{(h)}$ are not of concern.
That is, the last $n_h/4$ items from group $h$ are put in the set $T_h$.
Now consider the class of instances $\bnu_{[m_1, m_2, \ldots, m_\gamma]}$ for any $m_h \in [n_h']$ which contains instances represented as $\nu_S$ where $S := \bigcup_{h \in [\gamma]}S_h$ and $S_h \subseteq I_h(n_h')$ such that $|S_h| = m_h$ and the scores of the items for the instance $\nu_S$ are
\begin{multline*}
    \forall h \in [\gamma], \forall i \in S_h,~~\theta_i = \theta^{(h)}\paren{\frac{1}{2} + \teps_h}^2,~~\forall i \in T_h,~~\theta_i = \theta^{(h)}\paren{\frac{1}{4}-\teps_h^2},\\\text{and}\qquad\forall i \not\in S_h\cup T_h,~~\theta_i = \theta^{(h)}\paren{\frac{1}{2} - \teps_h}^2,
\end{multline*}
for some numbers $\theta^{(h)}$ for each group $h \in [\gamma]$ to be defined later.

\begin{remark}
\label{rem:group_unique_inst}
Note that $S$ uniquely represents an instance $\nu_S\in\bnu_{[m_1, m_2, \ldots, m_\gamma]}$ for any fixed $m_h\in [n_h'], \forall h \in [\gamma]$.
\end{remark}

\begin{lemma}
\label{lem:group_inst_not}
For any $\theta^{(h)} > \frac{1}{1-2\teps_h}$ for each group $h \in [\gamma]$ and for any problem instance $\nu_{S} \in \bnu_{[m_1, m_2, \ldots, m_h]}$ such that for each group $h \in [\gamma]$, $m_h = n_h/4$, any \epqbr, say $\bsigma_{S}$, has to satisfy the following: for at least half the number of groups,
the number of items from $S_h \cup T_h$ in ranks $\frac{n_h}{4}+2$ to $n_h$ in the sub-ranking of the items only from group $h$ in $\bsigma$ should be strictly less than $\frac{n_h}{4}$. 
\end{lemma}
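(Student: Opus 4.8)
The plan is to argue by contradiction, transplanting the single‑group argument of \Cref{lem:inst_not} into each group separately and then combining the resulting group‑wise errors through the outer $\ell_q$ norm. Suppose the conclusion fails for $\bsigma_S$: then for strictly more than $\gamma/2$ of the groups $h$ — call these the \emph{bad} groups — the sub‑ranking $\bsigma_S^{(h)}$ places at least $\frac{n_h}{4}$ items of $S_h\cup T_h$ in positions $\frac{n_h}{4}+2,\ldots,n_h$. Fix a bad group $h$. Since $|S_h\cup T_h| = m_h+|T_h| = \frac{n_h}{4}+\frac{n_h}{4} = \frac{n_h}{2}$ and at least $\frac{n_h}{4}$ of them lie in that bottom block, at most $\frac{n_h}{4}$ of them lie in positions $1,\ldots,\frac{n_h}{4}+1$; this block has $\frac{n_h}{4}+1$ slots, so at least one slot is occupied by an item $j\in I_h[n_h']\setminus S_h$, i.e.\ an item carrying the low score $\theta^{(h)}(\tfrac12-\teps_h)^2$. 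As in \Cref{lem:inst_not}, the hypothesis $\theta^{(h)}>\frac{1}{1-2\teps_h}$ (which in particular forces $\teps_h<\tfrac12$) guarantees $\theta_i>\theta_j$ and $\theta_i-\theta_j>\teps_h$ for every $i\in S_h\cup T_h$. Each of the $\ge\frac{n_h}{4}$ items of $S_h\cup T_h$ sitting in positions $\frac{n_h}{4}+2,\ldots,n_h$ of $\bsigma_S^{(h)}$ is ranked below $j$ in $\bsigma_S^{(h)}$, hence below $j$ in $\bsigma_S$ (the relative order within group $h$ is preserved by definition of $\bsigma_S^{(h)}$), so each such item $i$ has $d_i(\bsigma_S;\theta)\ge\theta_i-\theta_j>\teps_h$.

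Consequently, for every bad group $h$,
\begin{align*}
\err_h(\bsigma_S;\theta)^p = \sum_{i\in G_h} d_i(\bsigma_S;\theta)^p \;>\; \frac{n_h}{4}\,\teps_h^p,
\end{align*}
and since $\teps_h=\epsilon_h\paren{\frac{4}{n_h}}^{1/p}$ this yields $\err_h(\bsigma_S;\theta) > \paren{\frac{n_h}{4}}^{1/p}\teps_h = \epsilon_h$. Now aggregate across groups, using $w(h)=\phi_h/n_h$ and $\epsilon_h^q=\epsilon^q\cdot\frac{2n_h}{\phi_h\gamma}$, so that $w(h)\,\epsilon_h^q=\frac{2\epsilon^q}{\gamma}$:
\begin{align*}
\efair(\bsigma_S;\theta)^q = \sum_{h\in[\gamma]} w(h)\,\err_h(\bsigma_S;\theta)^q \;\ge\; \sum_{h\ \text{bad}} w(h)\,\err_h(\bsigma_S;\theta)^q \;>\; \sum_{h\ \text{bad}} \frac{2\epsilon^q}{\gamma} \;>\; \frac{\gamma}{2}\cdot\frac{2\epsilon^q}{\gamma} = \epsilon^q,
\end{align*}
where the last strict inequality uses that there are strictly more than $\gamma/2$ bad groups (this bookkeeping works whether $\gamma$ is even or odd, since the negation of ``at least half the groups are good'' is ``more than half are bad''). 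Hence $\efair(\bsigma_S;\theta)>\epsilon$, contradicting that $\bsigma_S$ is an \epqbr, which proves the lemma.

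Most of this is routine once \Cref{lem:inst_not} is in hand; the points that need care are (i) that $d_i$ in \Cref{def:ours} is a maximum over \emph{all} items of $[n]$, so one must check the within‑group witness $j$ is a legitimate term in that maximum (it is, since $\theta_i>\theta_j$ and $i$ is ranked below $j$) and that ``ranked below'' transfers from $\bsigma_S^{(h)}$ to $\bsigma_S$; (ii) the pigeonhole count that at least one low‑score item intrudes into the top $\frac{n_h}{4}+1$ positions of each bad group; and (iii) the calibration $\epsilon_h=\epsilon\paren{\frac{2n_h}{\phi_h\gamma}}^{1/q}$, $\teps_h=\epsilon_h\paren{\frac{4}{n_h}}^{1/p}$, whose extra factors are exactly what make $w(h)\epsilon_h^q=\frac{2\epsilon^q}{\gamma}$ and hence let a contribution from just over half the groups already exceed $\epsilon^q$. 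The main conceptual obstacle — relative to the blind case — is precisely isolating the right ``half the groups'' event so that the bound survives the $\ell_q$ aggregation; the computations above are then mechanical.
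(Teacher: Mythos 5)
Your proposal is correct and follows essentially the same route as the paper's proof: contradiction via more than $\gamma/2$ ``bad'' groups, the pigeonhole argument placing a low-score item in the top $\frac{n_h}{4}+1$ positions of each bad group's sub-ranking, the resulting per-group error bound $\err_h > \epsilon_h$, and the aggregation using $w(h)\,\epsilon_h^q = \frac{2\epsilon^q}{\gamma}$ to exceed $\epsilon^q$. Your write-up is in fact slightly more careful than the paper's on two minor points (that the witness item is a valid term in the maximum defining $d_i$, and that relative order transfers from $\bsigma_S^{(h)}$ to $\bsigma_S$), but the argument is the same.
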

\begin{proof}
    Note that when $\theta^{(h)} > \frac{1}{1-2\teps_h}$, for any $i \in S_h, i' \in T_h$ and $i'' \in G_h \sm (S_h \cup T_h)$, 
    \[
    \theta_i - \theta_{i'} > \teps,~~\theta_{i'} - \theta_{i''} > \teps,~~\text{and}~~\theta_{i} - \theta_{i''} > \teps.
    \]
    Let us assume that there exists an \epqbr~for $\nu_S$, say $\bsigma_S$, such that the ranks $\frac{n_h}{4}+2$ to $n_h$ have $\ge \frac{n_h}{4}$ items from $S_h\cup T_h$.
    Let $\Gamma \subseteq [\gamma]$ be the set of groups on which this happens.
    Then for each such group in $\Gamma$, $\le \frac{n_h}{4}$ items from $S_h \cup T_h$ are in the ranks $1$ to $\frac{n_h}{4}+1$ in the sub-ranking $\bsigma_S^{(h)}$, since $|S_h\cup T_h| = \frac{n_h}{2}$.
    So there will be at least one item from $I_h[n_h']\sm S_h$ in ranks $1$ to $\frac{n_h}{4}+1$ in $\bsigma^{(h)}_S$, which implies that at least $\frac{n_h}{4}$ items in $S_h \cup T_h$ incur an error $> \teps_h$.
    Therefore, the overall error for that group will be,
    \begin{align*}
        \err_h(\bsigma^{(h)}_S;\theta_S) &= \paren{\sum_{i\in[n]}d_i(\bsigma^{(h)}_S;\theta_S)^p}^{1/p} > \paren{\frac{n}{4}\cdot\teps^p}^{1/p}\\
        &= \paren{\frac{n}{4}\cdot \paren{\epsilon\cdot \paren{\frac{4}{n}}^{1/p}}^p}^{1/p}
        = \paren{\epsilon^p}^{1/p}
        = \epsilon,
    \end{align*}
    Then, 
    \begin{align*}
        \efair(\bsigma_S;\theta_S)
        &= \paren{\sum_{h \in [\gamma]}\frac{\phi_{h}}{n_{h}}\cdot \err_h(\bsigma^{(h)}_S; \theta_S)^q}^{1/q}
        \ge \paren{\sum_{h \in \Gamma}\frac{\phi_{h}}{n_{h}}\cdot \err_h(\bsigma^{(h)}_S; \theta_S)^q}^{1/q}\\
        &= \paren{\sum_{h \in \Gamma}\frac{\phi_{h}}{n_{h}}\cdot \epsilon_h^q}^{1/q}
        = \paren{\sum_{h \in \Gamma}\frac{\phi_{h}}{n_{h}}\cdot \paren{\frac{2n_h}{\phi_h\gamma}\cdot\epsilon^q}}^{1/q}\\
        &> \paren{\frac{\gamma}{2}\cdot \frac{2\epsilon^q}{\gamma}}^{1/q}\\
        &= \epsilon.
    \end{align*}
\end{proof}

\paragraph{The alternative instances.}
We now fix any set $S^* \subset \bigcup_{h \in [\gamma]} I_h[n_h']$ such that for half the number of groups $\Gamma \subset [\gamma]$, $|S^*\cap G_h|= \frac{n_h}{2} $ and for the other half $|S^*\cap G_h|= \frac{n_h}{4} $. 
Lower bound on the sample complexity is now obtained by applying \Cref{lem:gar16} on a pair of instances $(\nu_{S^*}, \nu_{\tS^*})$, for all possible choices of $\tS^* = \sS \cup S'$, where for exactly one group $h \in [\gamma] \sm \Gamma$, $S' \subset \paren{I_h[n_h'] \sm \sS}$ and $|S'\cap G_h| = \frac{n_h}{4}$.
Note that there will be $\sum_{h \in [\gamma]\sm \Gamma}{n_h/2 \choose n_h/4}$ choices of $\tS^*$.
\paragraph{Describing the event.}
For any ranking $\bsigma \in \Sigma_n$, we denote by $\bsigma(r:r')$ the set of items in the ranking, $\bsigma$, from rank $r$ to rank $r'$, for any $1 \le r \le r' \le n$.
Consider the event $\cE$ for an instance $S$ that the algorithm $\cA$ outputs a ranking such that for at least half the number of groups the following holds: for the sub-ranking corresponding to the group $h \in [\gamma]$, the number of items from $S_h \cup T_h$ in the ranks $n_h/4+2$ to $n_h$ is $< n_h/4$. That is,
\[
\cE(S) := \left|\left\{h \in [\gamma]: \left|(S_h \cup T_h) \cap  \bsigma^{(h)}_{\cA}\left(n_h/4+2:n_h\right)\right| < n_h/4\right\}\right| \ge \frac{\gamma}{2}.
\]
This is a high probability for $S^*$ because otherwise, the error will be more than $\epsilon$, from \Cref{lem:inst_not}.
On the contrary, for the alternative instances with $\tS^*$, this is a low probability event because for at least half the number of groups all the items from $T_h$ have to appear in the ranks $n_h/4+2$ to $n_h$ or otherwise, $\ge n_h/4$ items from $S_h$ will be in ranks $n_h/4+2$ to $n_h$ and they all incur an error $> \teps_h$ due to an item from $T_h$ in the ranks $1$ to $n_h/4+1$; therefore the total error will be more than $\epsilon$.

It is easy to note that as $\cA$ is an \pac\,, obviously 
\begin{equation}
Pr_{S^*}( \cE(\sS) ) > Pr_{S^*}(\bsigma_{\cA} \text{ is an \ebr}) > 1-\delta,\label{eq:group_high_prob}
\end{equation}
and
\begin{equation}
Pr_{\tS^*}( \cE(\sS) ) < \delta,\label{eq:group_low_prob}
\end{equation}
for any alternative instance $\tS^*$.

The way the class of instances is constructed, the algorithms need to satisfy $\cE_h(S_h)$ for groups $h \in [\gamma] \sm \Gamma$ because the algorithm making pairwise comparisons in group $h \in [\gamma]\sm \Gamma$ differentiates between the true instance and a distinct set of alternative instances, namely those where the scores of the items from group $h$ are altered, but it can not differentiate between any other altered instance and the true instance.
Therefore,
\begin{align*}
    Pr_{\tS^*}( \cE(S^*) ) = Pr_{\tS^*}\paren{\bigcup_{h \in [\gamma]\sm \Gamma}\cE_h(S_h^*)} = \sum_{h \in [\gamma]\sm \Gamma}Pr_{\tS_h^*}(\cE_h(S_h^*)).
\end{align*}
Therefore, if there exists even one group $h \in [\gamma]\sm \Gamma$ such that $Pr_{\tS_h^*}( \cE_h(S_h^*) ) > \delta$, it implies that $Pr_{\tS^*}( \cE(S^*) ) > \delta$, which contradicts that $\cA$ is an \pac.
Therefore, for every group $h \in [\gamma]\sm \Gamma$, the algorithm $\cA$ needs to satisfy that $Pr_{\tS_h^*}( \cE_h(S_h^*) ) < \delta$. Similar argument gives us that $Pr_{S_h^*}( \cE_h(S_h^*) ) > 1 - \delta$, for every group $h \in [\gamma]\sm \Gamma$.

We can further apply \Cref{lem:lb_sym} to get 
\begin{equation}
    Pr_{\tS_h^*}( \cE_h(S^*_h) ) < \frac{\delta}{{n_h/2 \choose n_h/4}}, \forall h \in [\gamma] \sm \Gamma.
\end{equation}
From \Cref{lem:kl_div} we then have that, $KL(\nu^B_{S^*_h}), \nu^B_{\tS_h^*}) \le 64\teps_h^2$ for every $h \in [\gamma] \sm \Gamma$ and for every $B \in \cB_h$.
Therefore, applying \Cref{lem:gar16} for every group $h \in [\gamma] \sm \Gamma$ we get,

\begin{align}
\label{eq:group_lb2}
\nonumber \sum_{\{B \in \cB_h\}} \E_{\nu^B_{S_h^*}}[N_B(\tau_A)]KL(\nu^B_{S_h^*},\nu^B_{\tS_h^*}) \le \sum_{\{B \in \cB_h\}} \E_{\nu^B_{S_h^*}}[N_B(\tau_A)]{\frac{n_h}{2}-1\choose \frac{n_h}{4}-1}64\teps^2.
\end{align}

Finally noting that $\tau_A^{(h)} \ge \sum_{B \in \cB_h}[N_B(\tau_A)]$, we get 

\begin{align*}
{\frac{n_h}{2}-1\choose \frac{n_h}{4}-1}64\teps^2\E_{\nu^B_{S_h^*}}[\tau_A^{(h)}] =  \sum_{S \in \cB_h}\E_{\nu^B_{S_h^*}}[N_B(\tau_A^{(h)})]({\frac{n_h}{2}-1\choose \frac{n_h}{4}-1}64\teps^2) \ge {\frac{n_h}{2}\choose \frac{n_h}{4}}\ln \frac{{\frac{n_h}{2}\choose \frac{n_h}{4}}}{4\delta}.
\end{align*}
Therefore,
\begin{align*}
    \E[\tau_A^{(h)}] &\ge \frac{1}{\teps_h^2}\frac{n_h/2}{n_h/4}\ln \frac{{\frac{n_h}{2}\choose \frac{n_h}{4}}}{4\delta} = \frac{2}{\teps^2}\ln \frac{{\frac{n_h}{2}\choose \frac{n_h}{4}}}{\delta} \\
    &\ge \frac{2}{\teps_h^2} \ln \frac{2^{n_h/2}\sqrt{6}}{\sqrt{\pi\paren{3n_h/2+2}}\delta} = \frac{n_h}{\teps_h^2} + \frac{1}{\teps_h^2}\ln\frac{6}{\pi\paren{3n_h/2+2}\delta} \ge \frac{n_h}{\teps_h^2}.
\end{align*}

Since $\tau_A \ge \sum_{h \in [\gamma]\sm \Gamma}\tau_A^{(h)}$, we get,
\begin{align*}
    \E[\tau_A] \ge \sum_{h \in [\gamma] \sm \Gamma} \E[\tau_A^{(h)}] \ge \sum_{h \in [\gamma] \sm \Gamma} \frac{n_h}{\teps_h^2} \ge \frac{1}{2}\sum_{h \in [\gamma]}\frac{n_h}{\teps_h^2}.
\end{align*}
Therefore, the sample complexity to figure out the group-wise rankings is $\Omega\paren{\sum_{h \in [\gamma]}\frac{n_h}{\teps_h^2}}$.

\end{proof}

\newpage
\section{Additional Experimental Results}
\label{app:experiments}
Following is a summary of the results presented here:
\begin{enumerate}
    \item \Cref{tab:proportions} shows the proportions of the groups in the top $n$ items. This helps visualize that the group-wise errors computed by our \aware~algorithm are proportional to the sizes of the groups (see \Cref{fig:group_errors}).
    \item \Cref{fig:syn_geo,fig:syn_arith,fig:syn_steps,fig:syn_har} show the results on the synthetic datasets whose true scores are as shown in \Cref{fig:syn_true_scores}.
    \item \Cref{fig:syn_group_errors} shows the group-wise errors for the synthetic datasets. Again, \aware\\
    achieves proportional errors across groups.
    \item \Cref{fig:syn_unweighted,fig:syn_group_errors_unweighted} show the results when we choose $\phi_h = n_h$ instead of $\phi_h = 1$.
\end{enumerate}

\begin{table}[h]
    \centering
    \begin{tabular}{|l|c|c|c|c|}
    \hline
        \textbf{Dataset} & $n$ & $G_0$ & $G_1$ & $G_2$\\
        \hline
        COMPAS-race & 25 & 88\% &  12\% & - \\
        COMPAS-race & 50 & 88\% &  12\% & - \\
        COMPAS-race & 100 & 87\% &  13\% & - \\
        \hline
        COMPAS-gender & 25 & 92\% &  8\% & - \\
        COMPAS-gender & 50 & 92\% &  8\% & - \\
        COMPAS-gender & 100 & 84\% &  16\% & - \\
        \hline
        German-age & 25 & 76\% &  16\% & 8\% \\ 
        German-age & 50 & 76\% &  16\% & 8\% \\ 
        German-age & 100 & 76\% &  15\% & 9\% \\ 
        \hline
    \end{tabular}
    \caption{Proportion of the groups in the top $n$ items in the real-world datasets.}
    \label{tab:proportions}
\end{table}

\begin{figure}[h]
    \centering
    \includegraphics[scale=0.13]{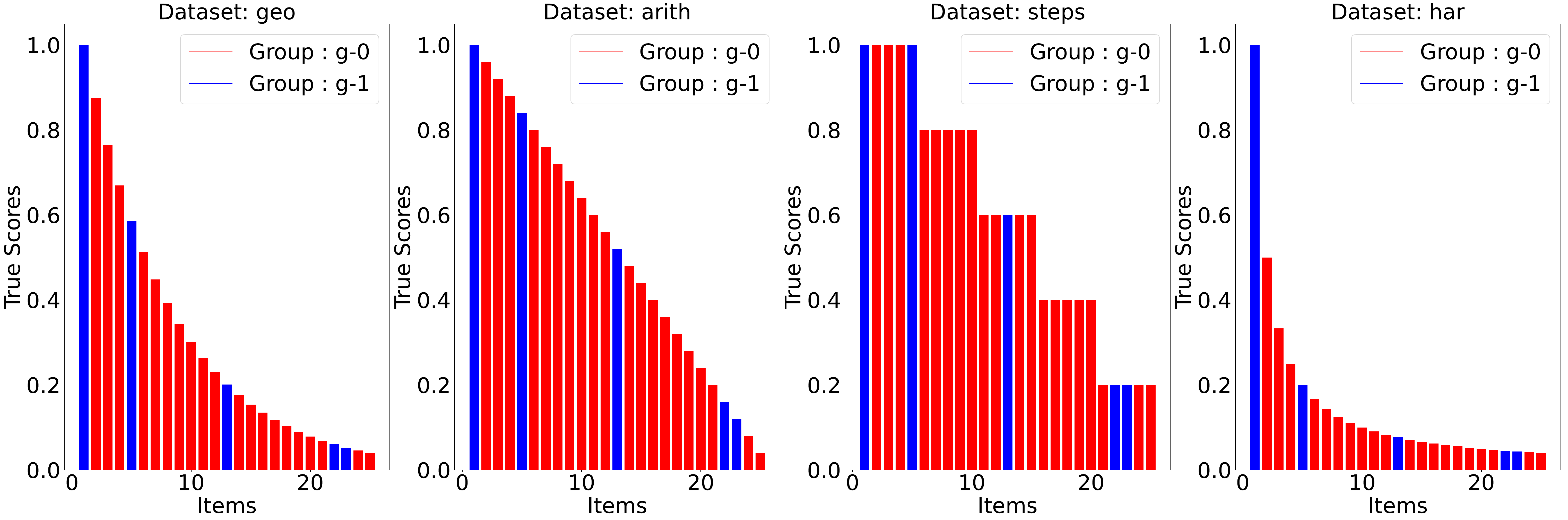}
    \caption{True scores of the synthetic datasets where in \textbf{geo} the scores decrease in a geometric progression, in \textbf{arith} the scores decrease in an arithmetic progression, in \textbf{steps} the scores decrease in an arithmetic progression but only for every 5 items, and for \textbf{har} the scores decrease in a harmonic progression. The colors of the bars represent the groups the items belong to.}
    \label{fig:syn_true_scores}
\end{figure}

\begin{figure}
    \centering
    \includegraphics[scale=0.25]{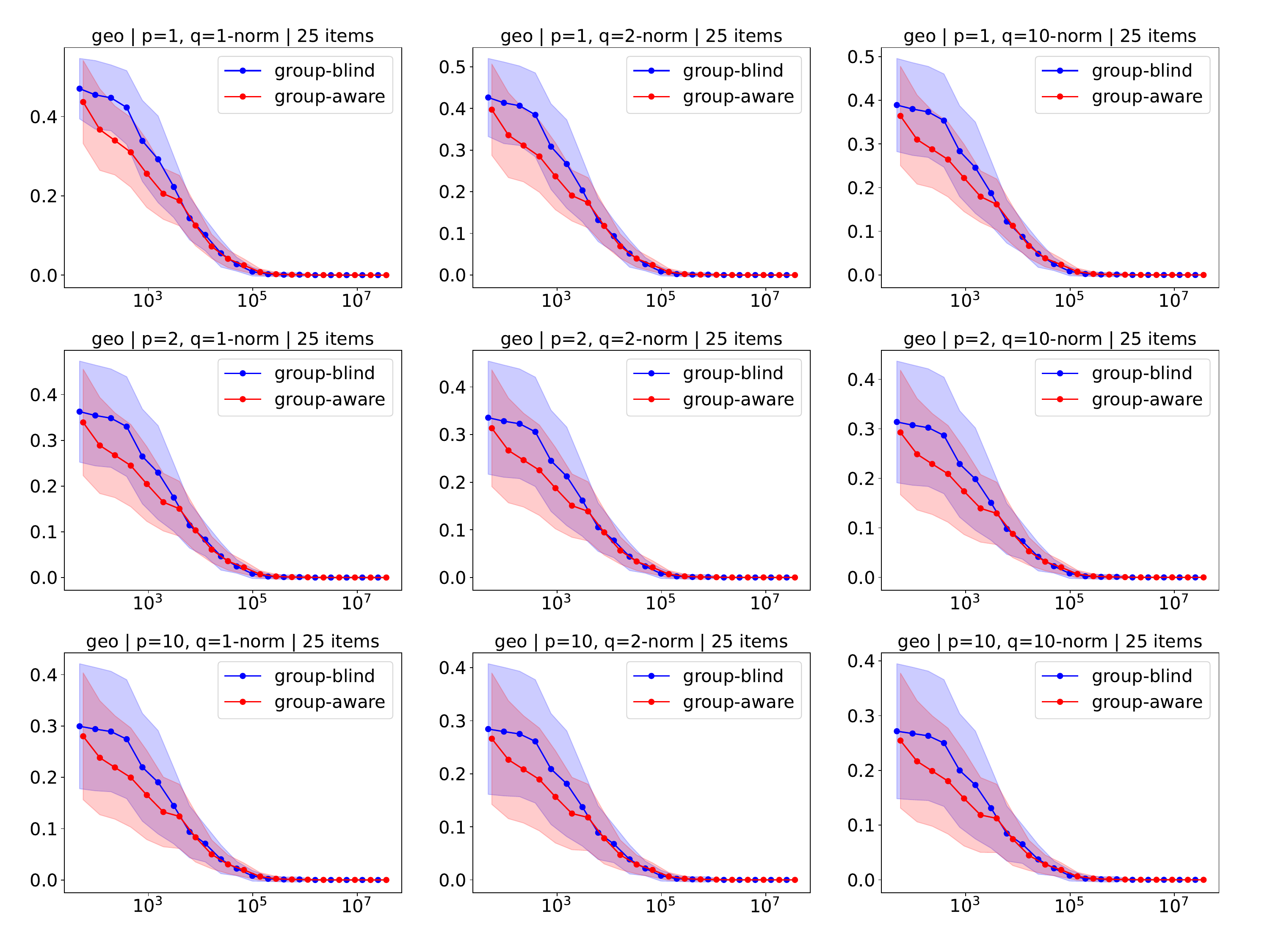}
    \caption{Group-wise errors (for $n=25$) for \textbf{geo} dataset for $g-0$ (majority group) and $g-1$ (minority group).}
    \label{fig:syn_geo}
\end{figure}

\begin{figure}
    \centering
    \includegraphics[scale=0.25]{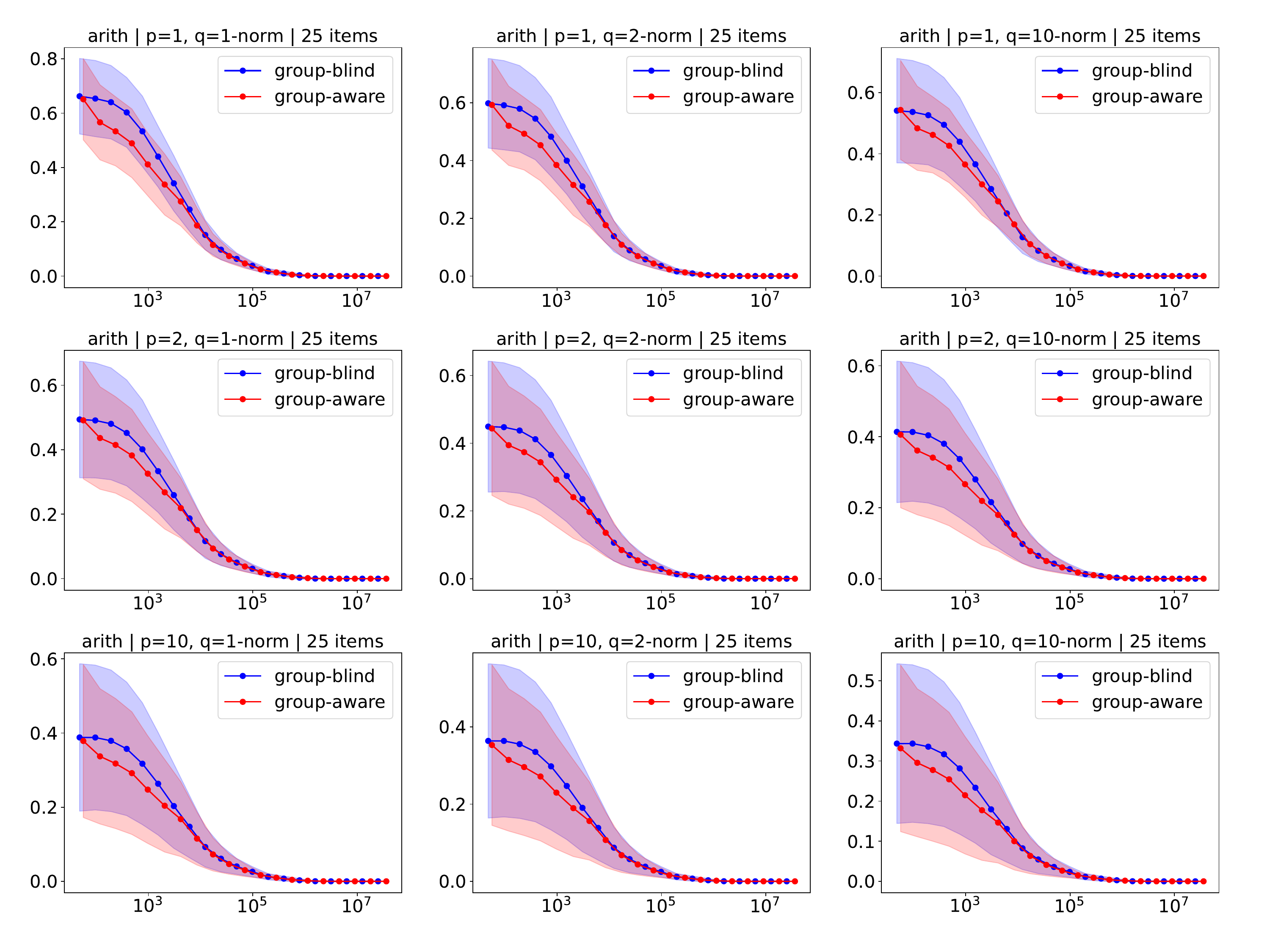}
    \caption{Group-wise errors (for $n=25$) for \textbf{arith} dataset for $g-0$ (majority group) and $g-1$ (minority group).}
    \label{fig:syn_arith}
\end{figure}

\begin{figure}
    \centering
    \includegraphics[scale=0.25]{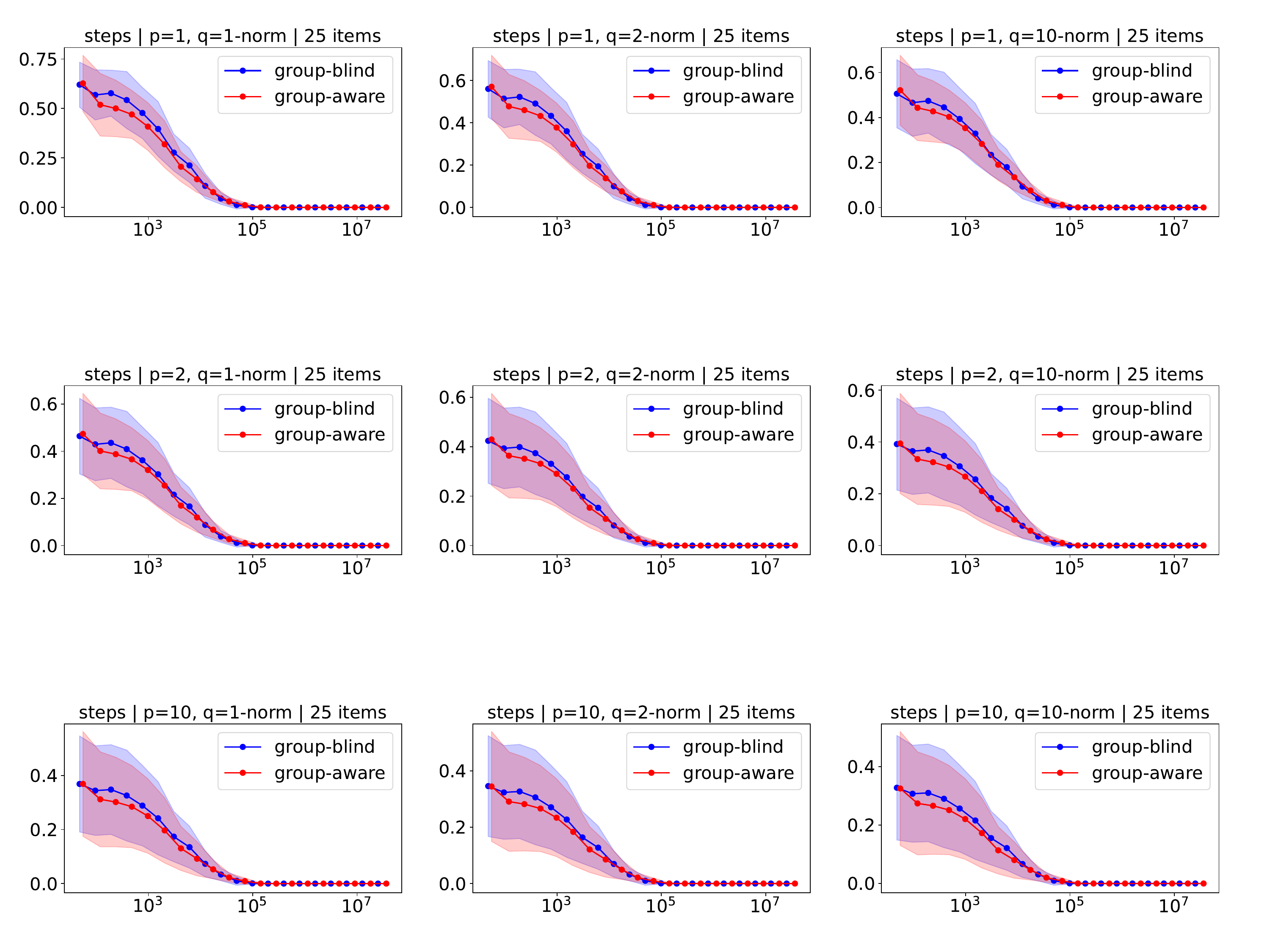}
    \caption{Group-wise errors (for $n=25$) for \textbf{steps} dataset for $g-0$ (majority group) and $g-1$ (minority group).}
    \label{fig:syn_steps}
\end{figure}

\begin{figure}
    \centering
    \includegraphics[scale=0.25]{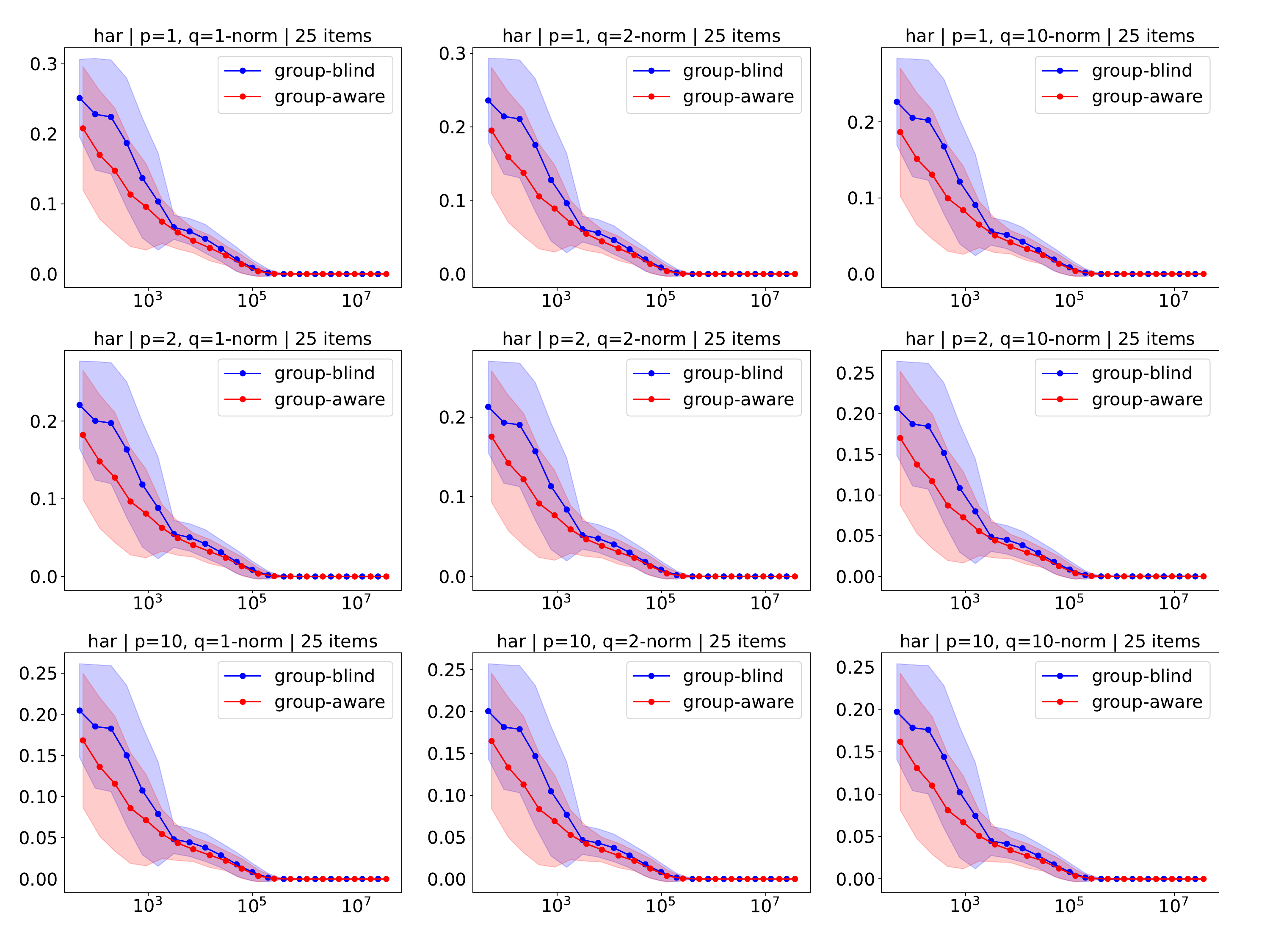}
    \caption{Group-wise errors (for $n=25$) for \textbf{har} dataset for $g-0$ (majority group) and $g-1$ (minority group).}
    \label{fig:syn_har}
\end{figure}

\begin{figure}
    \centering
    \includegraphics[scale=0.3]{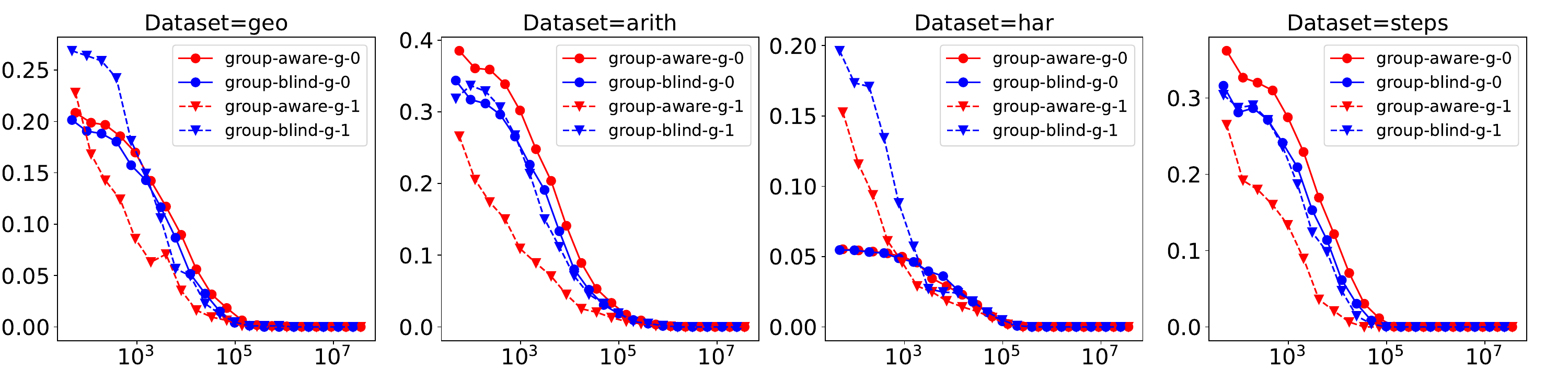}
    \caption{Group-wise errors (for $n=25, p=q=1$) for $g-0$ (majority group) and $g-1$ (minority group), for the synthetic datasets \textbf{(for $\phi_h = 1$)}.}
    \label{fig:syn_group_errors}
\end{figure}

\begin{figure}
    \centering
    \includegraphics[scale=0.2]{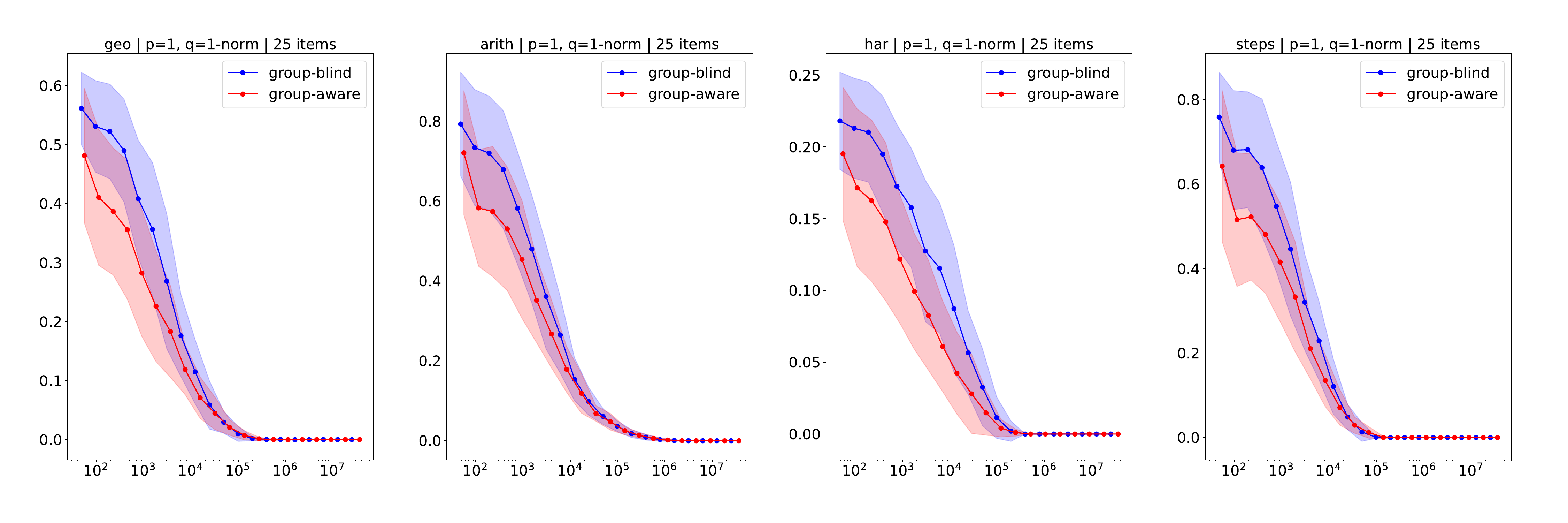}
    \caption{Group-wise errors (for $n=25$, $p=q=1$) for the synthetic datasets with $\phi_h = n_h$.}
    \label{fig:syn_unweighted}
\end{figure}

\begin{figure}
    \centering
    \includegraphics[scale=0.3]{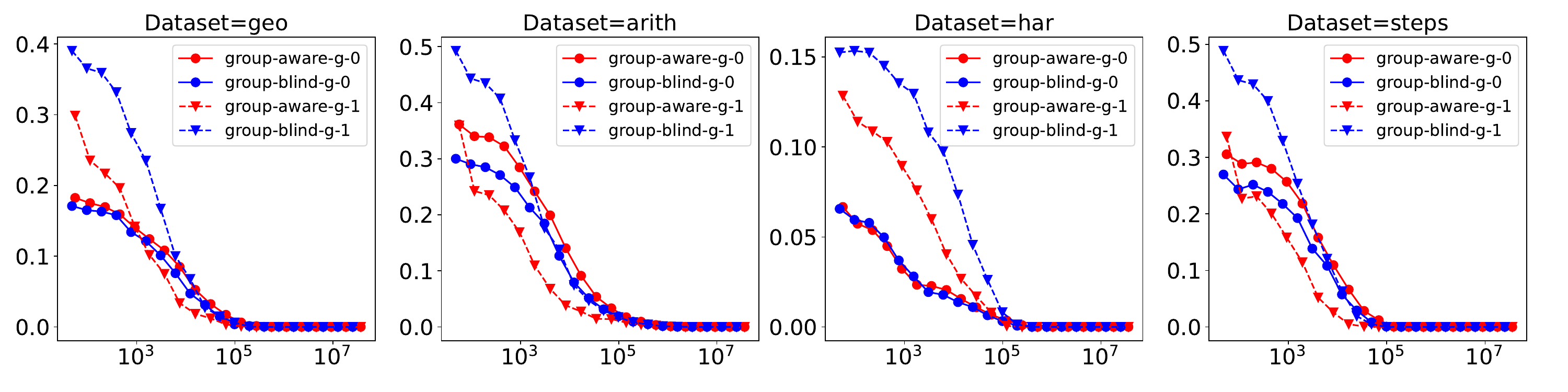}
    \caption{Group-wise errors (for $n=25, p=q=1$) for $g-0$ (majority group) and $g-1$ (minority group), for the synthetic datasets \textbf{(for $\phi_h = n_h$)}.}
    \label{fig:syn_group_errors_unweighted}
\end{figure}
\end{document}